\documentclass[10pt,twocolumn,letterpaper]{article}

\usepackage{cvpr}
\usepackage{times}
\usepackage{epsfig}
\usepackage{graphicx}
\usepackage{caption}
\usepackage{amsthm}
\usepackage{amsmath}
\usepackage{amssymb}
\usepackage{bm}
\usepackage{enumitem}
\usepackage{makecell}
\usepackage{wrapfig}
\usepackage{indentfirst}
\usepackage{verbatim}
\usepackage{color}
\usepackage{setspace}
\usepackage{makecell}
\usepackage{wrapfig}
\usepackage{multirow}
\usepackage{subfigure}
\usepackage{extarrows}
\usepackage{booktabs} 

\newcommand{\norm}[1]{\left\lVert#1\right\rVert}

\newtheorem{theorem}{Theorem}
\newtheorem{lemma}{Lemma}

\renewcommand{\captionlabelfont}{\scriptsize}


\usepackage[pagebackref=true,breaklinks=true,letterpaper=true,colorlinks,bookmarks=false]{hyperref}
\hypersetup{linkcolor=[rgb]{0.72,0.1,0.1}}
\hypersetup{citecolor=[rgb]{0.3,0.2,0.8}}

\cvprfinalcopy 


\ifcvprfinal\pagestyle{empty}\fi
\begin{document}


\title{Regularizing Neural Networks via Minimizing Hyperspherical Energy}

\author{\fontsize{10.8pt}{\baselineskip}\selectfont Rongmei Lin\textsuperscript{1}, Weiyang Liu\textsuperscript{2,*}, Zhen Liu\textsuperscript{3}, Chen Feng\textsuperscript{4}, Zhiding Yu\textsuperscript{5}, James M. Rehg\textsuperscript{2}, Li Xiong\textsuperscript{1}, Le Song\textsuperscript{2}\\
\fontsize{10pt}{\baselineskip}\selectfont \textsuperscript{1}Emory University\ \ \ \textsuperscript{2}Georgia Institute of Technology\ \ \ \textsuperscript{3}Mila, Universit\'e de Montr\'eal\ \ \ \textsuperscript{4}New York University\ \ \ \textsuperscript{5}NVIDIA\\
{\tt\small rongmei.lin@emory.edu\ \ wyliu@gatech.edu\ \ lxiong@emory.edu\ \  lsong@cc.gatech.edu} 
}

\maketitle
\thispagestyle{empty}

\begin{abstract}
\vspace{-0.75mm}
Inspired by the Thomson problem in physics where the distribution of multiple propelling electrons on a unit sphere can be modeled via minimizing some potential energy, hyperspherical energy minimization has demonstrated its potential in regularizing neural networks and improving their generalization power. In this paper, we first study the important role that hyperspherical energy plays in neural network training by analyzing its training dynamics. Then we show that naively minimizing hyperspherical energy suffers from some difficulties due to highly non-linear and non-convex optimization as the space dimensionality becomes higher, therefore limiting the potential to further improve the generalization. To address these problems, we propose the compressive minimum hyperspherical energy (CoMHE) as a more effective regularization for neural networks. Specifically, CoMHE utilizes projection mappings to reduce the dimensionality of neurons and minimizes their hyperspherical energy. According to different designs for the projection mapping, we propose several distinct yet well-performing variants and provide some theoretical guarantees to justify their effectiveness. Our experiments show that CoMHE consistently outperforms existing regularization methods, and can be easily applied to different neural networks.
\end{abstract}

\vspace{-2.5mm}
\section{Introduction}\footnote{*Weiyang Liu is the corresponding author.}
\vspace{-0.25mm}
Recent years have witnessed the tremendous success of deep neural networks in a variety of tasks. With its over-parameterization nature and hierarchical structure, deep neural networks achieve unprecedented performance on many challenging problems~\cite{he2015deep,girshick2014rich,long2015fully}, but their strong approximation ability also makes it easy to overfit the training set, which greatly affects the generalization on unseen samples. Therefore, how to restrict the huge parameter space and properly regularize the deep networks becomes increasingly important. Regularizations for neural networks can be roughly categorized into \emph{implicit} and \emph{explicit} ones. Implicit regularizations usually do not directly impose explicit constraints on neuron weights, and instead they regularize the networks in an implicit manner in order to prevent overfitting and stabilize the training. A lot of prevailing methods fall into this category, such as batch normalization~\cite{ioffe2015batch}, dropout~\cite{srivastava2014dropout}, weight normalization~\cite{salimans2016weight}, etc. Explicit regularizations~\cite{saxe2013exact,mishkin2016all,bansal2018can,rodriguez2016regularizing,huang2018orthogonal,LiuNIPS18} usually introduce some penalty terms for neuron weights, and jointly optimize them along with the other objective functions.

\begin{figure}[t]
\vspace{-0.6mm}
  \centering
  \renewcommand{\captionlabelfont}{\footnotesize}
  \setlength{\abovecaptionskip}{3pt}
  \setlength{\belowcaptionskip}{-10pt}
\includegraphics[width=3in]{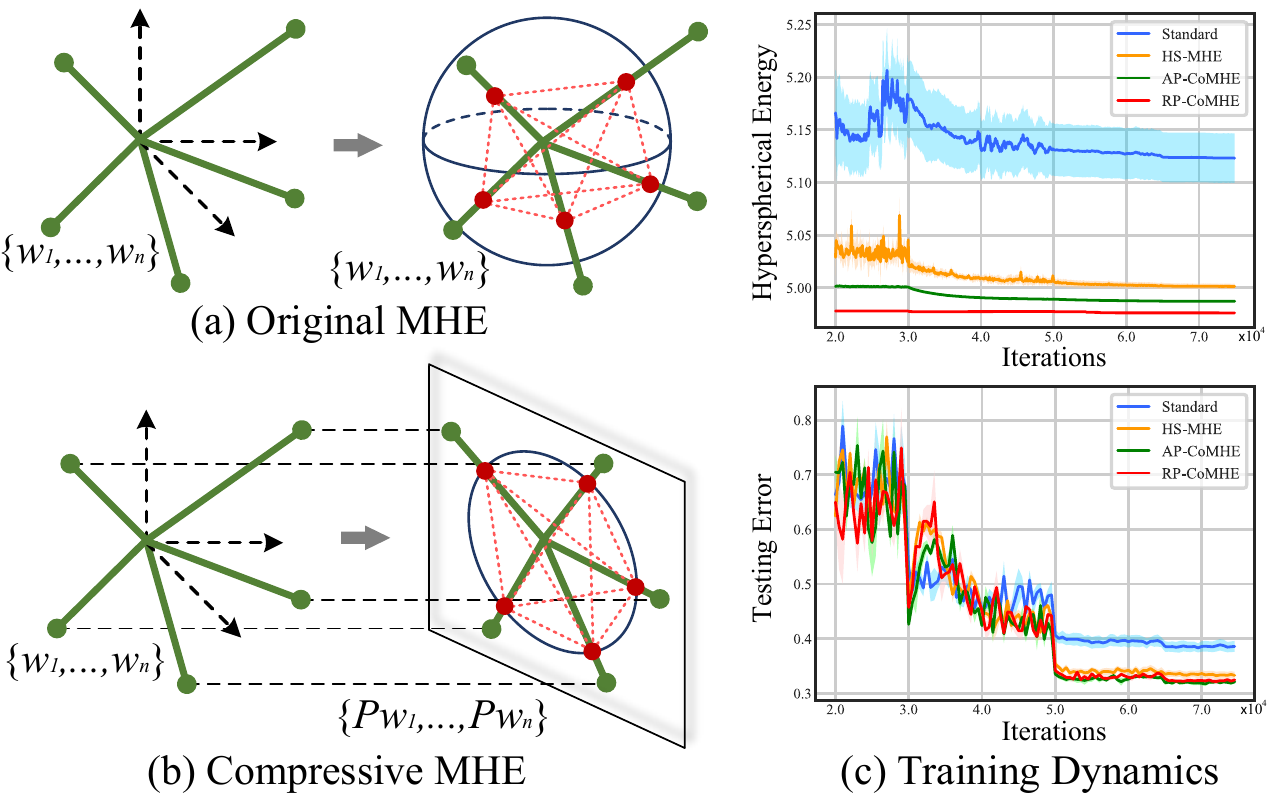}
  \caption{\footnotesize Comparison of original MHE and compressive MHE. In (c), the top figure shows the hyperspherical energy, and the bottom one shows the testing error (CIFAR-100). Experimental details are given in Appendix~\ref{exp1_detail}.}\label{rotation}
\end{figure}

\par
Among many existing explicit regularizations, minimum hyperspherical energy (MHE)~\cite{LiuNIPS18} stands out as a simple yet effective regularization that promotes the \emph{hyperspherical diversity} among neurons and significantly improves the network generalization. MHE regularizes the directions of neuron weights by minimizing a potential energy on a unit hypersphere that characterizes the hyperspherical diversity (such energy is defined as \emph{hyperspherical energy}~\cite{LiuNIPS18}). In contrast, standard weight decay only regularizes the norm of neuron weights, which essentially can be viewed as regularizing one dimension of the weights. MHE completes an important missing piece by regularizing the neuron directions (\ie, regularizing the rest dimensions of the weights).
\par
Although minimizing hyperspherical energy has already been empirically shown useful in a number of applications~\cite{LiuNIPS18}, two fundamental questions remain unanswered: \emph{(1) what is the role that hyperspherical energy plays in training a well-performing neural network?} and \emph{(2) How can the hyperspherical energy be effectively minimized?} To study the first question, we plot the training dynamics of hyperspherical energy (on CIFAR-100) in Fig.~\ref{rotation}(c) for a baseline convolutional neural network~(CNN) without any MHE variant, a CNN regularized by MHE~\cite{LiuNIPS18} and a CNN regularized by our CoMHE. More experimental details and full results (with more interesting baselines) are given in Appendix~\ref{exp1_detail}. From the empirical results in Fig.~\ref{rotation}(c), we find that both MHE and CoMHE can achieve much lower hyperspherical energy and testing error than the baseline, showing the effectiveness of minimizing hyperspherical energy. It also implies that lower hyperspherical energy typically leads to better generalization. We empirically observe that a trained neural network with lower hyperspherical energy often generalizes better (\ie, higher hyperspherical diversity leads to better generalization), and therefore we argue that hyperspherical energy is closely related to the generalization power of neural networks. In the rest of the paper, we delve into the second question that remains an open challenge: how to effectively minimize hyperspherical energy.

\par
By adopting the definition of hyperspherical energy as the regularization objective and naively minimizing it with back-propagation, MHE suffers from a few critical problems which limit it to further unleash its potential. First, the original MHE objective has a huge number of local minima and stationary points due to its highly non-convex and non-linear objective function. The problem can get even worse when the space dimension gets higher and the number of neurons becomes larger~\cite{batle2016generalized,calef2015estimating}. Second, the gradient of the original MHE objective \emph{w.r.t} the neuron weight is deterministic. Unlike the weight decay whose objective is convex, MHE has a complex and non-convex regularization term. Therefore, deterministic gradients may make the solution quickly fall into one of the bad local minima and get stuck there. Third, MHE defines an ill-posed problem in general. When the number of neurons is smaller than the dimension of the space (it is often the case in neural networks), it will be less meaningful to encourage the hyperspherical diversity since the neurons can not fully occupy the space. Last, in high-dimensional spaces, randomly initialized neurons are likely to be orthogonal to each other (see Appendix~\ref{hd_orth}). Therefore, these high-dimensional neurons can be trivially ``diverse'', leading to small gradients in original MHE that cause optimization difficulties. 
\par 
In order to address these problems and effectively minimize hyperspherical energy, we propose the compressive minimum hyperspherical energy (CoMHE) as a generic regularization for neural networks. The high-level intuition behind CoMHE is to project neurons to some suitable subspaces such that the hyperspherical energy can get minimized more effectively. Specifically, CoMHE first maps the neurons from a high-dimensional space to a low-dimensional one and then minimizes the hyperspherical energy of these neurons. Therefore, how to map these neurons to a low-dimensional space while preserving the desirable information in high-dimensional space is our major concern. Since we aim to regularize the directions of neurons, what we care most is the angular similarity between different neurons. To this end, we explore multiple novel methods to perform the projection and heavily study two main approaches: \emph{random projection} and \emph{angle-preserving projection}, which can reduce the dimensionality of neurons while still partially preserving the pairwise angles. 
\par
Random projection (RP) is a natural choice to perform the dimensionality reduction in MHE due to its simplicity and nice theoretical properties. RP can provably preserve the angular information, and most importantly, introduce certain degree of randomness to the gradients, which may help CoMHE escape from some bad local minima. The role that the randomness serves in CoMHE is actually similar to the \emph{simulated annealing}~\cite{xiang1997generalized,xiang2000efficiency} that is widely used to solve Thomson problem. Such randomness is often shown to benefit the generalization~\cite{kawaguchi2018deep,rahimi2008random}. We also provably show that using RP can well preserve the pairwise angles between neurons. Besides RP, we propose the angle-preserving projection (AP) as an effective alternative. AP is motivated by the goal that we aim to preserve the pairwise angles between neurons. Constructing an AP that can project neurons to a low-dimensional space that well preserves the angles is often difficult even with powerful non-linear functions, which is suggested by the strong conditions required for conformal mapping in complex analysis~\cite{nehari2012conformal}. Therefore, we frame the AP construction as an optimization problem which can be solved jointly with hyperspherical energy minimization. More interestingly, we consider the \emph{adversarial projection} for CoMHE, which minimizes the maximal energy attained by learning the projection. We formulate it as a min-max optimization and optimize it jointly with the neural network.
\par
However, it is inevitable to lose some information in low-dimensional spaces and the neurons may only get diverse in some particular low-dimensional spaces. To address it, we adopt multiple projections to better approximate the MHE objective in the original high-dimensional space. Specifically, we project the neurons to multiple subspaces, compute the hyperspherical energy in each space separately and then minimize the aggregation (\ie, average or max). Moreover, we reinitialize these projection matrix randomly every certain number of iterations to avoid trivial solutions. 

\par
In contrast to MHE that imposes a static regularization to the neurons, CoMHE dynamically regularizes the neurons based on the projection matrices. Such dynamic regularization is equivalent to adjusting the CoMHE objective function, making it easier to escape some bad local minima. Our contributions can be summarized as:
\par
\begin{itemize}[leftmargin=*,nosep,nolistsep]
    \item We first show that hyperspherical energy is closely related to generalization and then reveal the role it plays in training a neural network that generalizes well.
    \item To address the drawbacks of MHE, we propose CoMHE as a dynamic regularization to effectively minimize hyperspherical energy of neurons for better generalizability.
    \item We explore different ways to construct a suitable projection for CoMHE. Random projection and angle-preserving projection are proposed to reduce the dimensionality of neurons while preserving the angular information. We also consider several variants such as adversarial projection CoMHE and group CoMHE.
    \item We provide some theoretical insights for the proposed projections on the quality of preserving the angular similarity between different neurons.
    \item We show that CoMHE consistently outperforms the original MHE in different tasks. Notably, a 9-layer plain CNN regularized by CoMHE outperforms a standard 1001-layer ResNet by more than 2\% on CIFAR-100.
\end{itemize}

\vspace{-0.35mm}
\section{Related Work}
\vspace{-0.8mm}

Diversity-based regularization has been found useful in sparse coding~\cite{mairal2009online,ramirez2010classification}, ensemble learning~\cite{li2012diversity,kuncheva2003measures}, self-paced learning~\cite{jiang2014self}, metric learning~\cite{xie2018orthogonality}, latent variable models~\cite{xie2016diversity}, etc. Early studies in sparse coding~\cite{mairal2009online,ramirez2010classification} model the diversity with the empirical covariance matrix and show that encouraging such diversity can improve the dictionary's generalizability. \cite{xie2017uncorrelation} promotes the uniformity among eigenvalues of the component matrix in a latent space model. 
\cite{cogswell2016reducing,rodriguez2017regularizing,xie2017all,bansal2018can,mishkin2016all,xie2017all} characterize diversity among neurons with orthogonality, and regularize the neural network by promoting the orthogonality. Inspired by the Thomson problem in physics, MHE~\cite{LiuNIPS18} defines the hyperspherical energy to characterize the diversity on a unit hypersphere and shows significant and consistent improvement in supervised learning tasks. There are two MHE variants in \cite{LiuNIPS18}: full-space MHE and half-space MHE. Compared to full-space MHE, the half-space variant~\cite{LiuNIPS18} further eliminates the collinear redundancy by constructing virtual neurons with the opposite direction to the original ones and then minimizing their hyperspherical energy together. The importance of regularizing angular information is also discussed in \cite{liu2016large,liu2017hyper,liu2017sphereface,liu2018decoupled,deng2018arcface,wang2018cosface,wang2017normface,wang2018additive,liu2019neural,mettes2019hyperspherical}.

\vspace{-0.7mm}
\section{Compressive MHE}
\vspace{-0.4mm}
\subsection{Revisiting Standard MHE}
\vspace{-0.4mm}
MHE characterizes the diversity of $N$ neurons ($\thickmuskip=2mu \medmuskip=2mu \bm{W}_N=\{\bm{w}_1,\cdots,\bm{w}_N\in\mathbb{R}^{d+1}\}$) on a unit hypersphere using hyperspherical energy which is defined as 

\vspace{-5.3mm}
\begin{equation}\label{energy}
\scriptsize
\begin{aligned}
    \bm{E}_{s,d}(\hat{\bm{w}}_i|_{i=1}^N) &= \sum_{i=1}^{N}\sum_{j=1,j\neq i}^{N} f_s\big(\norm{\hat{\bm{w}}_i-\hat{\bm{w}}_j}\big)\\[-0.3mm]
    &=\left\{
{\begin{array}{*{20}{l}}
{\sum_{i\neq j} \norm{\hat{\bm{w}}_i-\hat{\bm{w}}_j}^{-s},\ \ s>0}\\
{\sum_{i\neq j} \log\big(\norm{\hat{\bm{w}}_i-\hat{\bm{w}}_j}^{-1}\big),\ \ s=0}
\end{array}} \right.
\end{aligned}
\vspace{-0.9mm}
\end{equation}
where $\thickmuskip=2mu \medmuskip=2mu \|\cdot\|$ denotes $\ell_2$ norm, $f_s(\cdot)$ is a decreasing real-valued function (we use $\thickmuskip=2mu \medmuskip=2mu f_s(z)=z^{-s}, s > 0$, \ie, Riesz $s$-kernels), and $\thickmuskip=2mu \medmuskip=2mu \hat{\bm{w}}_i=\frac{\bm{w}_i}{\|\bm{w}_i\|}$ is the $i$-th neuron weight projected onto the unit hypersphere $\thickmuskip=2mu \medmuskip=2mu \mathbb{S}^d=\{\bm{v}\in\mathbb{R}^{d+1}|\norm{\bm{v}}=1\}$. For convenience, we denote  $\thickmuskip=2mu \medmuskip=2mu \hat{\bm{W}}_N = \{\hat{\bm{w}}_1,\cdots,\hat{\bm{w}}_N\in\mathbb{S}^{d}\}$, and $\thickmuskip=2mu \medmuskip=2mu \bm{E}_s = \bm{E}_{s,d}(\hat{\bm{w}}_i|_{i=1}^N)$. Note that, each neuron is a convolution kernel in CNNs. MHE minimizes the hyperspherical energy of neurons using gradient descent during back-propagation, and MHE is typically applied to the neural network in a layer-wise fashion. We first write down the gradient of $\bm{E}_2$ \emph{w.r.t} $\hat{\bm{w}}_i$ and make the gradient to be zero:

\vspace{-5.1mm}
\begin{equation}\label{grad0}
\scriptsize
\nabla_{\hat{\bm{w}}_i} \bm{E}_2 = \sum_{j=1,j\neq i}^{N}  \frac{-2(\hat{\bm{w}}_i-\hat{\bm{w}}_j)}{\norm{\hat{\bm{w}}_i-\hat{\bm{w}}_j}^4}=0
\Rightarrow
\hat{\bm{w}}_i=\frac{\sum_{j=1,j\neq i}^{N}\alpha_j\hat{\bm{w}}_j}{\sum_{j=1,j\neq i}^{N}\alpha_j}
\vspace{-1.1mm}
\end{equation}
where $\thickmuskip=2mu \medmuskip=2mu \alpha_j=\|{\hat{\bm{w}}_i-\hat{\bm{w}}_j}\|^{-4}$. We use toy and informal examples to show that high dimensional space (\ie, $d$ is large) leads to much more stationary points than low-dimensional one. Assume there are $\thickmuskip=2mu \medmuskip=2mu K=K_1+K_2$ stationary points in total for $\hat{\bm{W}}_N$ to satisfy Eq.~\ref{grad0}, where $K_1$ denotes the number of stationary points in which every element in the solution is distinct and $K_2$ denotes the number of the rest stationary points. We give two examples: \emph{(i)} For $\thickmuskip=2mu \medmuskip=2mu (d+2)$-dimensional space, we can extend the solutions in $\thickmuskip=2mu \medmuskip=2mu (d+1)$-dimensional space by introducing a new dimension with zero value. The new solutions satisfy Eq.~\ref{grad0}. Because there are $\thickmuskip=2mu \medmuskip=2mu d+2$ ways to insert the zero, we have at least $\thickmuskip=2mu \medmuskip=2mu (d+2)K$ stationary points in $\thickmuskip=2mu \medmuskip=2mu (d+2)$-dimensional space. \emph{(ii)} We denote $\thickmuskip=2mu \medmuskip=2mu K'_1 = \frac{K_1}{(d+1)!}$ as the number of unordered sets that construct the stationary points. In $\thickmuskip=2mu \medmuskip=2mu (2d+2)$-dimensional space, we can construct $\thickmuskip=2mu \medmuskip=2mu \hat{\bm{w}}_j^E=\frac{1}{\sqrt{2}}\{\hat{\bm{w}}_j;\hat{\bm{w}}_j\}\in\mathbb{S}^{2d+1},\forall j$ that satisfies Eq.~\ref{grad0}. Therefore, there are at least $\thickmuskip=2mu \medmuskip=2mu \frac{(2d+2)!}{2^{d+1}} K'_1+K_2$ stationary points for $\hat{\bm{W}}_N$ in $\thickmuskip=2mu \medmuskip=2mu (2d+2)$-dimensional space, and besides this construction, there are much more stationary points. Therefore, MHE have far more stationary points in higher dimensions.

\vspace{-0.3mm}
\subsection{General Framework}
\vspace{-0.6mm}
To overcome MHE's drawbacks in high dimensional space, we propose the compressive MHE that projects the neurons to a low-dimensional space and then minimizes the hyperspherical energy of the projected neurons. In general, CoMHE minimizes the following form of energy:

\vspace{-2.1mm}
\begin{equation}\label{projected_energy}
\scriptsize
    \bm{E}^C_{s}(\hat{\bm{W}}_N):= \sum_{i=1}^{N}\sum_{j=1,j\neq i}^{N} f_s\big(\norm{g(\hat{\bm{w}}_i)-g(\hat{\bm{w}}_j)}\big)
\vspace{-0.6mm}
\end{equation}
where $\thickmuskip=2mu \medmuskip=2mu g:\mathbb{S}^{d}\rightarrow\mathbb{S}^{k}$ takes a normalized  $\thickmuskip=2mu \medmuskip=2mu (d+1)$-dimensional input and outputs a normalized $\thickmuskip=2mu \medmuskip=2mu (k+1)$-dimensional vector. $g(\cdot)$ can be either linear or nonlinear mapping. We only consider the linear case here. Using multi-layer perceptrons as $g(\cdot)$ is one of the simplest nonlinear cases. Similar to MHE, CoMHE also serves as a regularization in neural networks.
\vspace{-0.3mm}
\subsection{Random Projection for CoMHE}
\vspace{-0.6mm}
Random projection is in fact one of the most straightforward way to reduce dimensionality while partially preserving the angular information. More specifically, we use a random mapping $\thickmuskip=2mu \medmuskip=2mu g(\bm{v})=\frac{\bm{P}\bm{v}}{\|\bm{P}\bm{v}\|}$ where $\thickmuskip=2mu \medmuskip=2mu \bm{P}\in\mathbb{R}^{(k+1)\times(d+1)}$ is a Gaussian distributed random matrix (each entry follows i.i.d. normal distribution). In order to reduce the variance, we use $C$ random projection matrices to project the neurons and compute the hyperspherical energy separately:

\vspace{-3.9mm}
\begin{equation}\label{rp_energy}
\scriptsize
    \bm{E}^R_{s}(\hat{\bm{W}}_N) := \frac{1}{C}\sum_{c=1}^C\sum_{i=1}^{N}\sum_{j=1, j\neq i}^{N} f_s\big(\norm{ \frac{\bm{P}_c\hat{\bm{w}}_i}{\norm{\bm{P}_c\hat{\bm{w}}_i}} -\frac{\bm{P}_c\hat{\bm{w}}_j}{\norm{\bm{P}_c\hat{\bm{w}}_j}}}\big)
\vspace{-0.8mm}
\end{equation}
where $\thickmuskip=2mu \medmuskip=2mu\bm{P}_c,\forall c$ is a random matrix with each entry following the normal distribution $\thickmuskip=2mu \medmuskip=2mu\mathcal{N}(0,1)$. According to the properties of normal distribution~\cite{cramer2016mathematical}, every normalized row of the random matrix $\bm{P}$ is uniformly distributed on a hypersphere $\mathbb{S}^d$, which indicates that the projection matrix $\bm{P}$ is able to cover all the possible subspaces. Multiple projection matrices can also be interpreted as multi-view projection, because we are making use of information from multiple projection views. In fact, we do not necessarily need to average the energy for multiple projections, and instead we can use maximum operation (or some other meaningful aggregation operations). Then the objective becomes $\max_c\sum_{i=1}^{N}\sum_{j=1, j\neq i}^{N}f_s(\|{ \frac{\bm{P}_c\hat{\bm{w}}_i}{\norm{\bm{P}_c\hat{\bm{w}}_i}} -\frac{\bm{P}_c\hat{\bm{w}}_j}{\norm{\bm{P}_c\hat{\bm{w}}_j}}}\|)$. Considering that we aim to minimize this objective, the problem is in fact a min-max optimization. Note that, we will typically re-initialize the random projection matrices every certain number of iterations to avoid trivial solutions. Most importantly, using RP can provably preserve the angular similarity.
\par
\vspace{-0.4mm}
\subsection{Angle-preserving Projection for CoMHE}
\vspace{-0.2mm}
Recall that we aim to find a projection to project the neurons to a low-dimensional space that best preserves angular information. We transform the goal to an optimization:

\vspace{-4mm}
\begin{equation}\label{ap_p}
\scriptsize
\bm{P}^\star =\arg\min_{\bm{P}} \mathcal{L}_P:=\sum_{i\neq j} (\theta_{(\hat{\bm{w}}_i, \hat{\bm{w}}_j)} - \theta_{(\bm{P}\hat{\bm{w}}_i,\bm{P}\hat{\bm{w}}_j)})^2
\vspace{-2mm}
\end{equation}
where $\thickmuskip=2mu \medmuskip=2mu \bm{P}\in\mathbb{R}^{(k+1)\times(d+1)}$ is the projection matrix and $\theta_{(\bm{v}_1,\bm{v}_2)}$ denotes the angle between $\bm{v}_1$ and $\bm{v}_2$. For implementation convenience, we can replace the angle with the cosine value (\eg., use $\cos(\theta_{(\hat{\bm{w}}_i, \hat{\bm{w}}_j)})$ to replace $\theta_{(\hat{\bm{w}}_i, \hat{\bm{w}}_j)}$), so that we can directly use the inner product of normalized vectors to measure the angular similarity. With $\hat{\bm{P}}$ obtained in Eq.~\ref{ap_p}, we use a nested loss function:

\vspace{-5mm}
\begin{equation}\label{ap_energy}
\scriptsize
\begin{aligned}
    &\bm{E}^A_{s}(\hat{\bm{W}}_N,\bm{P}^\star) := \sum_{i=1}^{N}\sum_{j=1, j\neq i}^{N} f_s\big(\norm{ \frac{\bm{P}^\star\hat{\bm{w}}_i}{\norm{\bm{P}^\star\hat{\bm{w}}_i}} -\frac{\bm{P}^\star\hat{\bm{w}}_j}{\norm{\bm{P}^\star\hat{\bm{w}}_j}}}\big)\\
    &~~~~~~~~~\text{s.t.}~~~\bm{P}^\star =\arg\min_{\bm{P}} \sum_{i\neq j} (\theta_{(\hat{\bm{w}}_i, \hat{\bm{w}}_j)} - \theta_{(\bm{P}\hat{\bm{w}}_i,\bm{P}\hat{\bm{w}}_j)})^2
\end{aligned}
\vspace{-1mm}
\end{equation}
for which we propose two different ways to optimize the projection matrix $\bm{P}$. We can approximate $\bm{P}^\star$ using a few gradient descent updates. Specifically, we use two different ways to perform the optimization. Naively, we use a few gradient descent steps to update $\bm{P}$ in order to approximate $\bm{P}^\star$ and then update $\bm{W}_N$, which proceeds alternately. The number of iteration steps that we use to update $\bm{P}$ is a hyperparemter and needs to be determined by cross-validation. Besides the naive alternate one, we also use a different optimization of $\bm{W}_N$ by unrolling the gradient update of $\bm{P}$.
\par
\textbf{Alternating optimization.} The alternating optimization is to optimize $\bm{P}$ alternately with the network parameters $\bm{W}_N$. Specifically, in each iteration of updating the network parameters, we update $\bm{P}$ every number of inner iterations and use it as an approximation to $\bm{P}^\star$ (the error depends on the number of gradient steps we take). Essentially, we are alternately solving two separate optimization problems for $\bm{P}$ and $\bm{W}_N$ with gradient descent. 
\par
\textbf{Unrolled optimization.} Instead of naively updating $\bm{W}_N$ with approximate $\bm{P}^\star$ in the alternating optimization, the unrolled optimization further unrolls the update rule of $\bm{P}$ and embed it within the optimization of network parameters $\bm{W}_N$. If we denote the CoMHE loss with a given projection matrix $\bm{P}$ as $\bm{E}^A_{s}(\bm{W}_N,\bm{P})$ which takes $\bm{W}_N$ and $\bm{P}$ as input, then the unrolled optimization is essentially optimizing $\thickmuskip=2mu \medmuskip=2mu \bm{E}^A_{s}(\bm{W}_N,\bm{P}-\eta\cdot\frac{\partial \mathcal{L}_P}{\partial \bm{P}})$. It can also be viewed as minimizing the CoMHE loss after a single step of gradient descent \emph{w.r.t.} the projection matrix. This optimization includes the computation of second-order partial derivatives. Note that, it is also possible to unroll multiple gradient descent steps. Similar unrolling is also applied in \cite{finn2017model,liu2018darts,dai2018coupled}.

\subsection{Notable CoMHE Variants}
\vspace{-0.7mm}
We provide more interesting CoMHE variants as an extension. We will have some preliminary empirical study on these variants, but our main focus is still on RP and AP.
\par
\textbf{Adversarial Projection for CoMHE.} We consider a novel CoMHE variant that adversarially learns the projection. The intuition behind is that we want to learn a projection basis that maximizes the hyperspherical energy while the final goal is to minimize this maximal energy. With such intuition, we can construct a min-max optimization:

\vspace{-6mm}
\begin{equation}
\scriptsize
\min_{\hat{\bm{W}}_N} \max_{\bm{P}} \bm{E}^{V}_{s}(\hat{\bm{W}}_N,\bm{P})\!:=  \sum_{i=1}^{N}\sum_{j=1,j\neq i}^{N}\!f_s\big(\norm{ \frac{\bm{P}\hat{\bm{w}}_i}{\norm{\bm{P}\hat{\bm{w}}_i}} -\frac{\bm{P}\hat{\bm{w}}_j}{\norm{\bm{P}\hat{\bm{w}}_j}}}\big)
\vspace{-1mm}
\end{equation}
which can be solved by gradient descent similar to \cite{goodfellow2014generative}. From a game-theoretical perspective, $\bm{P}$ and $\hat{\bm{W}}_N$ can be viewed as two players that are competing with each other. However, due to the instability of solving the min-max problem, the performance of this projection is unstable.

\par
\textbf{Group CoMHE.} Group CoMHE is a very special case in the CoMHE framework. The basic idea is to divide the weights of each neuron into several groups and then minimize the hyperspherical energy within each group. For example in CNNs, group MHE divides the channels into groups and minimizes within each group the MHE loss. Specifically, the objective function of group CoMHE is

\vspace{-5mm}
\begin{equation}
\scriptsize
    \bm{E}^G_{s}(\hat{\bm{W}}_N) := \frac{1}{C}\sum_{c=1}^C\sum_{i=1}^{N}\sum_{j=1, j\neq i}^{N} f_s\big(\norm{ \frac{\bm{P}_c\hat{\bm{w}}_i}{\norm{\bm{P}_c\hat{\bm{w}}_i}} -\frac{\bm{P}_c\hat{\bm{w}}_j}{\norm{\bm{P}_c\hat{\bm{w}}_j}}}\big)
\vspace{-1mm}
\end{equation}
where $\bm{P}_c$ is a diagonal matrix with every diagonal entry being either $0$ or $1$, and $\thickmuskip=2mu \medmuskip=2mu \sum_c\bm{P}_c=\bm{I}$ (in fact, this is optional). There are multiple ways to divide groups for the neurons, and typically we will divide groups according to the channels, similar to \cite{wu2018group}. More interestingly, one can also divide the groups in a \emph{stochastic} fashion.

\vspace{-0.3mm}
\subsection{Shared Projection Basis in Neural Networks}
\vspace{-0.4mm}
In general, we usually need different projection bases for neurons in different layers of the neural network. However, we find it beneficial to share some projection bases across different layers. We only share the projection matrix for the neurons in different layers that have the same dimensionality. For example in a neural network, if the neurons in the first layer have the same dimensionality with the neurons in the second layer, we will share their projection matrix that reduces the dimensionality. Sharing the projection basis can effectively reduce the number of projection parameters and may also reduce the inconsistency within the hyperspherical energy minimization of projected neurons in different layers. Most importantly, it can empirically improve the network generalizability while using much fewer parameters and saving more computational overheads.
\vspace{-0.5mm}
\section{Theoretical Insights}
\vspace{-0.3mm}

\subsection{Angle Preservation}
\vspace{-0.3mm}
We start with highly relevant properties of random projection and then delve into the angular preservation.
\vspace{-0.5mm}
\begin{lemma}[Mean Preservation of Random Projection]\label{lem1}
For any $\bm{w}_1,\bm{w}_2\in\mathbb{R}^d$ and any random Gaussian distributed matrix $\bm{P}\in\mathbb{R}^{k\times d}$ where $\bm{P}_{ij}=\frac{1}{\sqrt{n}}r_{ij}$, if $r_{ij},\forall i,j$ are i.i.d. random variables from $\mathcal{N}(0,1)$, we have $\mathbb{E}(\langle \bm{P}\bm{w}_1,\bm{P}\bm{w}_2 \rangle)=\langle \bm{w}_1,\bm{w}_2\rangle$.
\end{lemma}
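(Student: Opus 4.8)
The plan is to reduce the claim to computing the second-moment matrix $\mathbb{E}(\bm{P}^\top\bm{P})$. First I would rewrite the random inner product as a bilinear form in the two fixed vectors: since $\inp{\bm{P}\bm{w}_1}{\bm{P}\bm{w}_2}=(\bm{P}\bm{w}_1)^\top(\bm{P}\bm{w}_2)=\bm{w}_1^\top\bm{P}^\top\bm{P}\,\bm{w}_2$, and since $\bm{w}_1,\bm{w}_2$ are deterministic, linearity of expectation pulls the expectation onto the matrix, giving $\mathbb{E}(\inp{\bm{P}\bm{w}_1}{\bm{P}\bm{w}_2})=\bm{w}_1^\top\,\mathbb{E}(\bm{P}^\top\bm{P})\,\bm{w}_2$. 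Thus the whole statement follows once I show $\mathbb{E}(\bm{P}^\top\bm{P})=\bm{I}$.

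Next I would compute $\mathbb{E}(\bm{P}^\top\bm{P})$ entrywise. Writing out the $(a,b)$ entry, $(\bm{P}^\top\bm{P})_{ab}=\sum_{l=1}^{k}\bm{P}_{la}\bm{P}_{lb}=\frac{1}{n}\sum_{l=1}^{k}r_{la}r_{lb}$, so $\mathbb{E}((\bm{P}^\top\bm{P})_{ab})=\frac{1}{n}\sum_{l=1}^{k}\mathbb{E}(r_{la}r_{lb})$. The i.i.d.\ $\mathcal{N}(0,1)$ assumption does the rest: for $a\neq b$ the variables $r_{la},r_{lb}$ are independent with zero mean, so each term vanishes; for $a=b$ each term equals $\mathbb{E}(r_{la}^2)=1$, giving a sum of $k$. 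Hence $\mathbb{E}((\bm{P}^\top\bm{P})_{ab})=\frac{k}{n}\delta_{ab}$, and substituting back yields $\mathbb{E}(\inp{\bm{P}\bm{w}_1}{\bm{P}\bm{w}_2})=\frac{k}{n}\inp{\bm{w}_1}{\bm{w}_2}$. Equivalently, one can avoid forming $\bm{P}^\top\bm{P}$ and expand directly as $\inp{\bm{P}\bm{w}_1}{\bm{P}\bm{w}_2}=\sum_{l}\big(\sum_{a}\bm{P}_{la}(\bm{w}_1)_a\big)\big(\sum_{b}\bm{P}_{lb}(\bm{w}_2)_b\big)$ and take expectations term by term, which produces the same cross-term cancellation.

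The only point to watch is the normalization constant. For the stated identity to hold exactly I need $n=k$, i.e.\ the scaling $\bm{P}_{ij}=\frac{1}{\sqrt{k}}r_{ij}$, which cancels the factor $k$ arising from the diagonal sum; I would state this convention explicitly at the outset so that the final equality reads $\langle\bm{w}_1,\bm{w}_2\rangle$ rather than $\frac{k}{n}\langle\bm{w}_1,\bm{w}_2\rangle$. Beyond this bookkeeping there is no genuine obstacle: the result is a one-line second-moment computation whose entire substance is the independence and unit variance of the Gaussian entries, and nothing about the dimensions $k$ or $d$ enters apart from the scaling.
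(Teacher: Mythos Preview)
Your proposal is correct and is essentially the same argument as the paper's: the paper expands $\inp{\bm{P}\bm{w}_1}{\bm{P}\bm{w}_2}$ directly as $\frac{1}{n}\sum_{l}\big(\sum_j r_{lj}\{\bm{w}_1\}_j\big)\big(\sum_i r_{li}\{\bm{w}_2\}_i\big)$ and uses independence (cross terms vanish) and unit variance (diagonal terms survive), which is exactly your ``alternative'' direct expansion; your matrix formulation $\mathbb{E}(\bm{P}^\top\bm{P})=\bm{I}$ is just a cleaner packaging of the same computation. Your remark on the normalization is also apt: the paper's own proof sums the row index from $1$ to $n$ even though $\bm{P}$ has $k$ rows, so it too is silently taking $n=k$.
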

\vspace{-0.5mm}
\par
This lemma indicates that the mean of randomly projected inner product is well preserved, partially justifying why using random projection actually makes senses.
\par
Johnson-Lindenstrauss lemma (JLL)~\cite{dasgupta2003elementary,kaban2015improved} (in Appendix~\ref{app_jll}) establishes a guarantee for the Euclidean distance between randomly projected vectors. However, JLL does not provide the angle preservation guarantees. It is nontrivial to provide a guarantee for angular similarity from JLL.
\par
\vspace{-0.6mm}
\begin{theorem}[Angle Preservation I]\label{thm1}
Given $\bm{w}_1,\bm{w}_2\in\mathbb{R}^d$, $\bm{P}\in\mathbb{R}^{k\times d}$ is a random projection matrix that has i.i.d. $0$-mean $\sigma$-subgaussian entries, and $\bm{P}\bm{w}_1,\bm{P}\bm{w}_2\in\mathbb{R}^k$ are the randomly projected vectors of $\bm{w}_1,\bm{w}_2$ under $\bm{P}$. Then $\forall \epsilon\in(0,1)$, we have that
\vspace{-5mm}

\begin{equation}
\scriptsize
\begin{aligned}
    \frac{\cos(\theta_{(\bm{w}_1,\bm{w}_2)})-\epsilon}{1+\epsilon}<\cos(\theta_{(\bm{P}\bm{w}_1,\bm{P}\bm{w}_2)})
    <\frac{\cos(\theta_{(\bm{w}_1,\bm{w}_2)})+\epsilon}{1-\epsilon}
\end{aligned}
\vspace{-1.4mm}
\end{equation}
which holds with probability $\big(1-2\exp({-\frac{k\epsilon^2}{8}})\big)^2$. 
\end{theorem}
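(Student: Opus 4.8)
The plan is to exploit the scale-invariance of the cosine and reduce everything to the distributional Johnson--Lindenstrauss norm-preservation bound available from the appendix, namely $\Pr[\,|\norm{\bm{P}\bm{x}}^2-\norm{\bm{x}}^2|>\epsilon\norm{\bm{x}}^2\,]\le 2\exp(-k\epsilon^2/8)$ for a fixed vector $\bm{x}$ under the $0$-mean $\sigma$-subgaussian assumption. Since $\cos(\theta_{(\bm{P}\bm{w}_1,\bm{P}\bm{w}_2)})$ is unchanged when we rescale $\bm{w}_1,\bm{w}_2$ or $\bm{P}$, I would first assume $\norm{\bm{w}_1}=\norm{\bm{w}_2}=1$ and take the variance-normalized $\bm{P}$ for which $\mathbb{E}\norm{\bm{P}\bm{x}}^2=\norm{\bm{x}}^2$, so the concentration bound applies verbatim. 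Writing $c=\cos(\theta_{(\bm{w}_1,\bm{w}_2)})=\inp{\bm{w}_1}{\bm{w}_2}$ and $\cos(\theta_{(\bm{P}\bm{w}_1,\bm{P}\bm{w}_2)})=\inp{\bm{P}\bm{w}_1}{\bm{P}\bm{w}_2}/(\norm{\bm{P}\bm{w}_1}\,\norm{\bm{P}\bm{w}_2})$, the shape $\frac{c\pm\epsilon}{1\mp\epsilon}$ of the target signals that I should prove a numerator estimate $\inp{\bm{P}\bm{w}_1}{\bm{P}\bm{w}_2}\in[c-\epsilon,\,c+\epsilon]$ and a denominator estimate $\norm{\bm{P}\bm{w}_1}\norm{\bm{P}\bm{w}_2}\in[1-\epsilon,\,1+\epsilon]$ separately, and then divide.

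For the denominator I would apply the norm bound to the unit vectors $\bm{w}_1$ and $\bm{w}_2$: on the corresponding events $\norm{\bm{P}\bm{w}_i}^2\in[1-\epsilon,1+\epsilon]$, hence the product of norms lies in $[1-\epsilon,1+\epsilon]$. The numerator is the delicate part, because JLL controls norms (distances) but not inner products directly; the bridge is the polarization identity $\inp{\bm{P}\bm{w}_1}{\bm{P}\bm{w}_2}=\tfrac14\big(\norm{\bm{P}(\bm{w}_1+\bm{w}_2)}^2-\norm{\bm{P}(\bm{w}_1-\bm{w}_2)}^2\big)$. Applying the norm bound to $\bm{w}_1+\bm{w}_2$ and $\bm{w}_1-\bm{w}_2$, whose squared norms are $2(1+c)$ and $2(1-c)$, and substituting the interval estimates, the $c$-terms reinforce while the $\epsilon$-terms accumulate; a short computation collapses the cross terms to exactly $c-\epsilon\le\inp{\bm{P}\bm{w}_1}{\bm{P}\bm{w}_2}\le c+\epsilon$. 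Dividing the numerator interval by the denominator interval, with care about the sign of the numerator when $c$ is negative, then yields $\frac{c-\epsilon}{1+\epsilon}<\cos(\theta_{(\bm{P}\bm{w}_1,\bm{P}\bm{w}_2)})<\frac{c+\epsilon}{1-\epsilon}$.

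The main obstacle I expect is the probability bookkeeping required to land on exactly $\big(1-2\exp(-k\epsilon^2/8)\big)^2$ rather than a cruder union bound over four separate JLL events. I would try to organize the argument so that only two norm-concentration events are genuinely needed: the lower cosine bound follows from the sum/difference events alone, since by AM--GM $\sqrt{\norm{\bm{P}\bm{w}_1}^2\norm{\bm{P}\bm{w}_2}^2}\le\tfrac12(\norm{\bm{P}\bm{w}_1}^2+\norm{\bm{P}\bm{w}_2}^2)=\tfrac14(\norm{\bm{P}(\bm{w}_1+\bm{w}_2)}^2+\norm{\bm{P}(\bm{w}_1-\bm{w}_2)}^2)$ is already controlled by those two events; the matching upper bound then follows by reusing the \emph{same} two events under the substitution $\bm{w}_2\mapsto-\bm{w}_2$, which swaps the roles of sum and difference and sends $c\mapsto-c$. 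Treating these two events as (approximately) independent gives the squared success probability. The points most likely to trip up a careless write-up are the AM--GM step, the sign casework in the final division, and checking that the substitution trick produces an upper bound no stronger than the stated (slightly loose) $\frac{c+\epsilon}{1-\epsilon}$.
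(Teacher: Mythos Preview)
Your plan is essentially the paper's: bound the numerator $\inp{\bm{P}\bm{w}_1}{\bm{P}\bm{w}_2}$ and the denominator $\norm{\bm{P}\bm{w}_1}\norm{\bm{P}\bm{w}_2}$ separately via JLL-type norm concentration, then divide. The only packaging difference is that the paper quotes the numerator bound as a black-box lemma (the dot-product bound of Kab\'an, giving $\inp{\bm{P}\bm{w}_1}{\bm{P}\bm{w}_2}\in[\,c-\epsilon,\,c+\epsilon\,]\cdot k\sigma^2\norm{\bm{w}_1}\norm{\bm{w}_2}$) and then applies JLL directly to $\bm{w}_1$ and $\bm{w}_2$ for the denominator, whereas you unpack that lemma via polarization on $\bm{w}_1\pm\bm{w}_2$; these are the same argument at different levels of abstraction. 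On the probability bookkeeping you worry about: the paper simply asserts $(1-2e^{-k\epsilon^2/8})^2$ from the two norm events for $\bm{w}_1,\bm{w}_2$ and does \emph{not} separately account for the dot-product event (nor justify independence), so your concern is well placed but you need not work as hard as your AM--GM/symmetry scheme to match what the paper actually proves---and indeed that scheme would still leave you without the \emph{lower} bound on the denominator needed in the sign cases you flag, so just follow the paper and invoke JLL on $\bm{w}_1,\bm{w}_2$ directly.
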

\vspace{-2.2mm}
\par
\begin{theorem}[Angle Preservation II]
Given $\bm{w}_1,\bm{w}_2\in\mathbb{R}^d$, $\bm{P}\in\mathbb{R}^{k\times d}$ is a Gaussian random projection matrix where $\thickmuskip=2mu \medmuskip=2mu \bm{P}_{ij}=\frac{1}{\sqrt{n}}r_{ij}$ ($r_{ij},\forall i,j$ are i.i.d. random variables from $\mathcal{N}(0,1)$), and $\bm{P}\bm{w}_1,\bm{P}\bm{w}_2\in\mathbb{R}^k$ are the randomly projected vectors of $\bm{w}_1,\bm{w}_2$ under $\bm{P}$. Then $\forall \epsilon\in(0,1)$ and $\bm{w}_1^\top\bm{w}_2>0$, we have that
\vspace{-5.4mm}

\begin{equation}
\scriptsize
\begin{aligned}
    &\frac{1+\epsilon}{1-\epsilon}\cos(\theta_{(\bm{w}_1,\bm{w}_2)})-
    \frac{2\epsilon}{1-\epsilon}<\cos(\theta_{(\bm{P}\bm{w}_1,\bm{P}\bm{w}_2)})\\
    &\ \ \ \ \ \ \ \ \ \ \ \ \ \ \ \ \ <\frac{1-\epsilon}{1+\epsilon}\cos(\theta_{(\bm{w}_1,\bm{w}_2)})+\frac{1+2\epsilon}{1+\epsilon}-\frac{\sqrt{(1-\epsilon^2)}}{1+\epsilon}
\end{aligned}
\vspace{-1.65mm}
\end{equation}
which holds with probability $1-6\exp(-\frac{k}{2}(\frac{\epsilon^2}{2}-\frac{\epsilon^3}{3}))$. 
\label{thm2}\end{theorem}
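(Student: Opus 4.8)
The plan is to reduce everything to the Johnson--Lindenstrauss norm guarantee applied to a small, carefully chosen set of vectors, and then convert the resulting norm bounds into a cosine bound through the polarization identity. First I would note that $\cos(\theta_{(\bm{w}_1,\bm{w}_2)})$ is invariant to rescaling of $\bm{w}_1,\bm{w}_2$, so without loss of generality take $\norm{\bm{w}_1}=\norm{\bm{w}_2}=1$, which gives $\norm{\bm{w}_1-\bm{w}_2}^2=2(1-\cos(\theta_{(\bm{w}_1,\bm{w}_2)}))$. The key identity is
\[
\cos(\theta_{(\bm{P}\bm{w}_1,\bm{P}\bm{w}_2)})=\frac{\norm{\bm{P}\bm{w}_1}^2+\norm{\bm{P}\bm{w}_2}^2-\norm{\bm{P}(\bm{w}_1-\bm{w}_2)}^2}{2\norm{\bm{P}\bm{w}_1}\norm{\bm{P}\bm{w}_2}},
\]
which expresses the projected cosine entirely through the three squared lengths $\norm{\bm{P}\bm{w}_1}^2$, $\norm{\bm{P}\bm{w}_2}^2$, $\norm{\bm{P}(\bm{w}_1-\bm{w}_2)}^2$. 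This is precisely why three, rather than four, vectors enter, and it matches the failure probability in the statement: applying the Gaussian JLL bound (Appendix~\ref{app_jll}) to each of $\bm{w}_1$, $\bm{w}_2$, and $\bm{w}_1-\bm{w}_2$ gives, for each, a two-sided deviation of probability at most $2\exp(-\tfrac{k}{2}(\tfrac{\epsilon^2}{2}-\tfrac{\epsilon^3}{3}))$, and a union bound over the three events yields the stated $1-6\exp(-\tfrac{k}{2}(\tfrac{\epsilon^2}{2}-\tfrac{\epsilon^3}{3}))$.

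Next I would condition on this success event, abbreviating $\theta=\theta_{(\bm{w}_1,\bm{w}_2)}$ and writing $a=\norm{\bm{P}\bm{w}_1}^2$, $b=\norm{\bm{P}\bm{w}_2}^2$, $c=\norm{\bm{P}(\bm{w}_1-\bm{w}_2)}^2$, so that $a,b\in[1-\epsilon,1+\epsilon]$ and $c\in[2(1-\epsilon)(1-\cos\theta),\,2(1+\epsilon)(1-\cos\theta)]$. The task reduces to bounding $f(a,b,c)=\frac{a+b-c}{2\sqrt{ab}}$ over this interval box. The hypothesis $\bm{w}_1^\top\bm{w}_2>0$ forces the numerator positive, fixing the monotonicity directions: since $f$ is decreasing in $c$, the lower bound takes $c=2(1+\epsilon)(1-\cos\theta)$ and the upper bound takes $c=2(1-\epsilon)(1-\cos\theta)$. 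For the lower bound I would then drive $a,b$ to the small endpoint; substituting $a=b=1-\epsilon$ and simplifying collapses $f$ to $\frac{1+\epsilon}{1-\epsilon}\cos\theta-\frac{2\epsilon}{1-\epsilon}$, exactly the left inequality. For the upper bound I would push $a,b$ toward $1+\epsilon$, but here the denominator $2\sqrt{ab}$ must be controlled when the two projected norms are unequal; bounding the spread through $(\sqrt{a}-\sqrt{b})^2\le(\sqrt{1+\epsilon}-\sqrt{1-\epsilon})^2=2(1-\sqrt{1-\epsilon^2})$ is what injects the $\sqrt{1-\epsilon^2}$ term and yields the right inequality.

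I expect the upper bound to be the main obstacle, because the numerator and denominator of $f$ are coupled through the shared variables $a,b$: one cannot simply maximize $a+b-c$ and minimize $2\sqrt{ab}$ independently without overshooting the claimed constant. The crux is to verify that the chosen relaxation remains a valid bound uniformly over the whole box $[1-\epsilon,1+\epsilon]^2$, including the unequal-norm corner where $a$ and $b$ sit at opposite endpoints. A short monotonicity check handles this: $f$ has no interior stationary point, since $\partial f/\partial a=0$ and $\partial f/\partial b=0$ would force $c=0$ and are thus incompatible whenever $c>0$, so the extrema lie on the boundary and reduce to comparing finitely many corner configurations. On the lower side the minimum lands cleanly at the corner $a=b=1-\epsilon$; on the upper side the $(\sqrt{a}-\sqrt{b})^2$ estimate supplies exactly the slack needed to dominate the unequal-norm corner. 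Throughout I would use Lemma~\ref{lem1} only as motivation—the inner product is preserved in expectation—while all quantitative work rests on the JLL norm concentration together with the polarization identity.
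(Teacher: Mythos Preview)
Your plan matches the paper's proof almost step for step: both apply the Gaussian norm-concentration bound to the three vectors $\bm{w}_1,\bm{w}_2,\bm{w}_1-\bm{w}_2$, take a union bound to obtain the $1-6\exp(\cdot)$ probability, and convert norm information into a cosine via the law-of-cosines identity. The lower bound goes through exactly as you describe, and the paper's corresponding step (its Eq.~supp2) is the same calculation dressed differently.

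There is, however, a real gap in your upper-bound argument as written. You frame the task as ``bound $f(a,b,c)=\frac{a+b-c}{2\sqrt{ab}}$ over the interval box,'' and then assert that the $(\sqrt{a}-\sqrt{b})^2$ estimate ``supplies exactly the slack needed to dominate the unequal-norm corner.'' It does not. Take $\epsilon=\tfrac12$, $\cos\theta=0.8$: the mixed corner $(a,b,c)=(1-\epsilon,1+\epsilon,c_{\min})=(0.5,1.5,0.2)$ lies in your box and yields $f=1.8/(2\sqrt{0.75})\approx 1.039$, which exceeds the claimed upper bound $\approx 1.023$. The reason this corner is harmless in reality is that it is \emph{infeasible}: $f$ is the cosine of the angle between $\bm{P}\bm{w}_1$ and $\bm{P}\bm{w}_2$, so Cauchy--Schwarz forces $f\le 1$, equivalently $c\ge(\sqrt{a}-\sqrt{b})^2$. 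Your box omits this constraint, and without it the optimization you describe cannot recover the stated constant. (Relatedly, your claim that $\bm{w}_1^\top\bm{w}_2>0$ ``forces the numerator positive'' is false over the box; $\langle\bm{P}\bm{w}_1,\bm{P}\bm{w}_2\rangle$ need not inherit the sign of $\langle\bm{w}_1,\bm{w}_2\rangle$.)

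The repair is one line. Writing $f=1+\frac{(\sqrt{a}-\sqrt{b})^2-c}{2\sqrt{ab}}$ and using $f\le 1$ (so the second fraction is nonpositive), you may replace the denominator by its maximum $2(1+\epsilon)$, then bound the numerator by $(\sqrt{1+\epsilon}-\sqrt{1-\epsilon})^2-c_{\min}=2(1-\sqrt{1-\epsilon^2})-2(1-\epsilon)(1-\cos\theta)$. Simplifying gives precisely the paper's upper bound. This is exactly what the paper's Eq.~supp3 encodes: its inequality is equivalent to $(\alpha-\beta)^2\le(\sqrt{1+\epsilon}-\sqrt{1-\epsilon})^2$ together with the Cauchy--Schwarz bound $\langle\bm{P}\bm{w}_1,\bm{P}\bm{w}_2\rangle\le\|\bm{P}\bm{w}_1\|\|\bm{P}\bm{w}_2\|$. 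So your instinct about the $(\sqrt{a}-\sqrt{b})^2$ spread estimate is the right one; you just need to pair it with $f\le 1$ rather than trying to win over the unconstrained box.
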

\vspace{-0.3mm}
\par
Theorem~\ref{thm1} is one of our main theoretical results and reveals that the angle between randomly projected vectors is well preserved. Note that, the parameter $\sigma$ of the subgaussian distribution is not related to our bound for the angle, so any Gaussian distributed random matrix has the property of angle preservation. The projection dimension $k$ is related to the probability that the angle preservation bound holds. Theorem~\ref{thm2} is a direct result from \cite{shi2012margin}. It again shows that the angle between randomly projected vectors is provably preserved. Both Theorem~\ref{thm1} and Theorem~\ref{thm2} give upper and lower bounds for the angle between randomly projected vectors. If $\thickmuskip=2mu \medmuskip=2mu \theta_{(\bm{w}_1,\bm{w}_2)}>\arccos(\frac{\epsilon+3\epsilon^2}{3\epsilon+\epsilon^2})$, then the lower bound in Theorem~\ref{thm1} is tighter than the lower bound in Theorem~\ref{thm2}. If $\thickmuskip=2mu \medmuskip=2mu \theta_{(\bm{w}_1,\bm{w}_2)}>\arccos(\frac{1-3\epsilon^2-(1-\epsilon)\sqrt{1-\epsilon^2}}{3\epsilon-\epsilon^2})$, the upper bound in Theorem~\ref{thm1} is tighter than the upper bound in Theorem~\ref{thm2}. To conclude, Theorem~\ref{thm1} gives tighter bounds when the angle of original vectors is large. Since AP is randomly initialized every certain number of iterations and minimizes the angular difference before and after the projection, AP usually performs better than RP in preserving angles. Without the angle-preserving optimization, AP reduces to RP.

\vspace{-0.15mm}
\subsection{Statistical Insights}
\vspace{-0.25mm}
We can also draw some theoretical intuitions from spherical uniform testing~\cite{cuesta2009projection} in statistics. Spherical uniform testing is a nonparametric statistical hypothesis test that checks whether a set of observed data is generated from a uniform distribution on a hypersphere or not. Random projection is in fact an important tool~\cite{cuesta2009projection} in statistics to test the uniformity on hyperspheres, while our goal is to promote the same type of hyperspherical uniformity (\ie, diversity). Specifically, we have $N$ random samples $\thickmuskip=2mu \medmuskip=2mu \bm{w}_1,\cdots,\bm{w}_N$ of $\mathbb{S}^d$-valued random variables, and the random projection $\bm{p}$ which is another random variable independent of $\bm{w}_i,\forall i$ and uniformly distributed on $\mathbb{S}^d$. The projected points of $\bm{w}_i,\forall i$ is $\thickmuskip=2mu \medmuskip=2mu y_i=\bm{p}^\top \bm{w}_i,\forall i$. The distribution of $y_i,\forall i$ uniquely determines the distribution of $\bm{w}_1$, as is specified by Theorem~\ref{thm3}.
\par
\vspace{-1.5mm}
\begin{theorem}[Unique Distribution Determination of Random Projection]
Let $\bm{w}$ be a $\mathbb{S}^d$-valued random variable and $\bm{p}$ be a random variable that is uniformly distributed on $\mathbb{S}^d$ and independent of $\bm{w}$. With probability one, the distribution of $\bm{w}$ is uniquely determined by the distribution of the projection of $\bm{w}$ on $\bm{p}$. More specifically, if $\bm{w}_1$ and $\bm{w}_2$ are $\mathbb{S}^d$-valued random variables, independent of $\bm{p}$ and we have a positive probability for the event that $\bm{p}$ takes a value $\bm{p}_0$ such that the two distributions satisfy $\bm{p}_0^\top\bm{w}_1\sim\bm{p}_0^\top\bm{w}_2$, then $\bm{w}_1$ and $\bm{w}_2$ are identically distributed.
\label{thm3}\end{theorem}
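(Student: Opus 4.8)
The plan is to pass to characteristic functions and exploit the fact that $\mathbb{S}^d$-valued random variables are bounded, so their characteristic functions are real-analytic. Writing $\phi_i(\bm{t})=\mathbb{E}(\exp(\mathrm{i}\,\bm{t}^\top\bm{w}_i))$ for $i=1,2$ and $\bm{t}\in\mathbb{R}^{d+1}$ (recall $\mathbb{S}^d\subset\mathbb{R}^{d+1}$), I would first note that the characteristic function of the real random variable $\bm{p}_0^\top\bm{w}_i$ is exactly $s\mapsto\phi_i(s\bm{p}_0)$. Hence the hypothesis $\bm{p}_0^\top\bm{w}_1\sim\bm{p}_0^\top\bm{w}_2$ is equivalent, by uniqueness of one-dimensional characteristic functions, to $\phi_1(s\bm{p}_0)=\phi_2(s\bm{p}_0)$ for every $s\in\mathbb{R}$. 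So the difference $h:=\phi_1-\phi_2$ vanishes on the entire ray $\mathbb{R}\bm{p}_0$ for each such direction $\bm{p}_0$.

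Next I would establish real-analyticity of $h$ on $\mathbb{R}^{d+1}$. Because $\norm{\bm{w}_i}=1$ we have $|\bm{t}^\top\bm{w}_i|\le\norm{\bm{t}}$, so the power series $\sum_{n\ge0}\frac{\mathrm{i}^n}{n!}\mathbb{E}((\bm{t}^\top\bm{w}_i)^n)$ converges absolutely for every $\bm{t}$ and represents $\phi_i$; thus each $\phi_i$, and therefore $h$, is real-analytic on $\mathbb{R}^{d+1}$ (indeed it extends to an entire function on $\mathbb{C}^{d+1}$). Splitting $h$ into real and imaginary parts yields two real-valued real-analytic functions, to which rigidity arguments will apply.

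I would then analyze the zero set. Let $E=\{\bm{p}\in\mathbb{S}^d:\bm{p}^\top\bm{w}_1\sim\bm{p}^\top\bm{w}_2\}$; by assumption $E$ carries positive mass under the uniform (i.e.\ normalized surface) measure on $\mathbb{S}^d$, hence positive $d$-dimensional surface measure $\sigma(E)>0$. By the first paragraph, $h$ vanishes on the cone $C=\{s\bm{p}:s\in\mathbb{R},\,\bm{p}\in E\}$. Passing to polar coordinates, the $(d+1)$-dimensional Lebesgue measure of $C\cap\{\norm{\bm{t}}\le R\}$ is a positive constant times $\big(\int_0^R r^{d}\,\mathrm{d}r\big)\cdot\sigma(E)>0$. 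Thus $h$ (equivalently each of its two real components) vanishes on a subset of $\mathbb{R}^{d+1}$ of positive Lebesgue measure.

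The key step, and the main obstacle, is the rigidity principle for real-analytic functions: a real-analytic function on the connected domain $\mathbb{R}^{d+1}$ whose zero set has positive Lebesgue measure must vanish identically. Applying this to the real and imaginary parts of $h$ forces $h\equiv0$, i.e.\ $\phi_1\equiv\phi_2$ on all of $\mathbb{R}^{d+1}$, and then the Fourier-inversion uniqueness theorem for characteristic functions gives $\bm{w}_1\sim\bm{w}_2$. The delicate point is precisely that a \emph{positive-measure} (neither open nor full-measure) set of good directions already suffices; this is exactly where the boundedness of $\bm{w}_i$---which upgrades continuity to analyticity---is indispensable, and it is what sharpens the classical Cram\'er--Wold theorem (requiring all directions) to the present form.
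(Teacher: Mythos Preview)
Your argument is correct. The chain---characteristic functions, real-analyticity from boundedness, positive-measure zero set of the cone, rigidity of real-analytic functions, Fourier uniqueness---is sound, and each step holds as you state it. In particular the passage from positive surface measure of $E\subset\mathbb{S}^d$ to positive Lebesgue measure of the cone $C$ via polar coordinates is exactly the right bridge, and the rigidity principle (a non-zero real-analytic function on a connected open set has zero set of Lebesgue measure zero) is standard.

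The paper takes a different route: it simply invokes, as a black box, a general Hilbert-space result of Cuesta-Albertos et al.\ (their Lemma~\ref{stat} in the appendix), which says that if two Borel probability measures $P,Q$ on a separable Hilbert space have $P_{\langle x\rangle}=Q_{\langle x\rangle}$ on a set of directions $x$ of positive non-degenerate Gaussian measure, and if $P$ satisfies the Carleman moment condition $\sum_n m_n^{-1/n}=\infty$, then $P=Q$. The paper then just observes that a $\mathbb{S}^d$-valued variable is bounded (so the moment condition is trivially met) and that the uniform measure on $\mathbb{S}^d$ can be realized as $\bm{z}/\norm{\bm{z}}$ with $\bm{z}$ Gaussian. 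Your proof is essentially the finite-dimensional, bounded-support specialization of the \emph{proof} of that lemma, carried out explicitly: the Carleman condition is precisely what, in the general setting, replaces your direct analyticity argument. What you gain is a self-contained, elementary argument with no external citations; what the paper's route buys is brevity and immediate generality (the cited lemma covers unbounded variables with suitable moment growth and infinite-dimensional ambient spaces).
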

\vspace{-1.5mm}
\par
Theorem~\ref{thm3} shows that the distributional information is well preserved after random projection, providing the CoMHE framework a statistical intuition and foundation. We emphasize that the randomness here is in fact very crucial. For a fixed projection $\bm{p}_0$, Theorem~\ref{thm3} does not hold in general. As a result, random projection for CoMHE is well motivated from the statistical perspective.
\vspace{-0.15mm}
\subsection{Insights from Random Matrix Theory}
\vspace{-0.25mm}
Random projection may also impose some implicit regularization to learning the neuron weights. \cite{durrant2015random} proves that random projection serves as a regularizer for the Fisher linear discrimination classifier. From metric learning perspective, the inner product between neurons $\bm{w}_1^\top\bm{w}_2$ will become $\bm{w}_1^\top\bm{P}^\top\bm{P}\bm{w}_2$ where $\bm{P}^\top\bm{P}$ defines a specific form of (low-rank) similarity~\cite{xing2003distance,liu2019neural}. \cite{baraniuk2008simple} proves that random projection satisfying the JLL \emph{w.h.p}
also satisfies the restricted isometry property~(RIP) \emph{w.h.p} under sparsity assumptions. In this case, the neuron weights can be well recovered~\cite{plan2013one,candes2006robust}. These results imply that randomly projected neurons in CoMHE may implicitly regularize the network.

\vspace{-1.5mm}
\section{Discussions and Extensions}
\vspace{-0.8mm}
\textbf{Bilateral projection for CoMHE.} If we view the neurons in one layer as a matrix $\thickmuskip=2mu \medmuskip=2mu \bm{W}=\{\bm{w}_1,\cdots,\bm{w}_n\}\in\mathbb{R}^{m\times n}$ where $m$ is the dimension of neurons and $n$ is the number of neurons, then the projection considered throughout the paper is to left-multiply a projection matrix $\thickmuskip=2mu \medmuskip=2mu \bm{P}_1\in\mathbb{R}^{r\times m}$ to $\bm{W}$. In fact, we can further reduce the number of neurons by right-multiplying an additional projection matrix $\thickmuskip=2mu \medmuskip=2mu \bm{P}_2\in\mathbb{R}^{n\times r}$ to $\bm{W}$. Specifically, we denote that $\thickmuskip=2mu \medmuskip=2mu \bm{Y}_1=\bm{P}_1\bm{W}$ and $\thickmuskip=2mu \medmuskip=2mu \bm{Y}_2=\bm{W}\bm{P}_2$. Then we can apply the MHE regularization separately to column vectors of $\bm{Y}_1$ and $\bm{Y}_2$. The final neurons are still $\bm{W}$. More interestingly, we can also approximate $\bm{W}$ with a low-rank factorization~\cite{zhou2011godec}: $\thickmuskip=2mu \medmuskip=2mu \tilde{\bm{W}}= \bm{Y}_2(\bm{P}_1\bm{Y}_2)^{-1}\bm{Y}_1\in\mathbb{R}^{m\times n}$. It inspires us to directly use two set of parameters $\bm{Y}_1$ and $\bm{Y}_2$ to represent the equivalent neurons $\tilde{\bm{W}}$ and apply the MHE regularization separately to their column vectors. Different from the former case, we use $\tilde{\bm{W}}$ as the final neurons. More details are in Appendix~\ref{biproj}.

\textbf{Constructing random projection matrices.} In random projection, we typically construct random matrices with each element drawn \emph{i.i.d.} from a normal distribution. However, there are many more choices for constructing a random matrices that can provably preserve distance information. For example, we have subsampled randomized Hadamard transform~\cite{ailon2006approximate} and count sketch-based projections~\cite{charikar2004finding}.

\textbf{Comparison to existing works.} One of the widely used regularizations is the orthonormal regularization~\cite{liu2017hyper,brock2016neural} that minimizes $\thickmuskip=2mu \medmuskip=2mu \|\bm{W}^\top\bm{W}-\bm{I}\|_F$ where $W$ denotes the weights of a group of neurons with each column being one neuron and $\bm{I}$ is an identity matrix. \cite{bansal2018can,rodriguez2017regularizing} are also built upon orthogonality. In contrast, both MHE and CoMHE do not encourage orthogonality among neurons and instead promote hyperspherical uniformity and diversity. 
\par

\par
\textbf{Randomness improves generalization.} Both RP and AP introduce randomness to CoMHE, and the empirical results show that such randomness can greatly benefit the network generalization. It is well-known that stochastic gradient is one of the key ingredients that help neural networks generalize well to unseen samples. Interestingly, randomness in CoMHE also leads to a stochastic gradient. \cite{kawaguchi2018deep} also theoretically shows that randomness helps generalization, partially justifying the effectiveness of CoMHE.

\vspace{-1.2mm}
\section{Experiments and Results}\label{expandresult}
\vspace{-0.6mm}
\subsection{Image Recognition}
\vspace{-1mm}
We perform image recognition to show the improvement of regularizing CNNs with CoMHE. Our goal is to show the superiority of CoMHE rather than achieving state-of-the-art accuracies on particular tasks. For all the experiments on CIFAR-10 and CIFAR-100 in the paper, we use the same data augmentation as \cite{he2015deep,liu2018decoupled}. For ImageNet-2012, we use the same data augmentation in \cite{liu2017hyper}. We train all the networks using SGD with momentum 0.9. All the networks use BN~\cite{ioffe2015batch} and ReLU if not otherwise specified. By default, all CoMHE variants are built upon half-space MHE. Experimental details are given in each subsection and Appendix~\ref{arch}. More experiments are given in Appendix~\ref{exp_time},\ref{extra_explor},\ref{graph}.

\vspace{-3mm}
\subsubsection{Ablation Study and Exploratory Experiments}\label{cifar_explore}
\vspace{-1mm}

\setlength{\columnsep}{9pt}
\begin{wraptable}{r}[0cm]{0pt}
	\centering
	\scriptsize
	\newcommand{\tabincell}[2]{\begin{tabular}{@{}#1@{}}#2\end{tabular}}
	\setlength{\tabcolsep}{2pt}
\begin{tabular}{c||c}
\specialrule{0em}{-11pt}{0pt}
  \hline
  \multirow{1}{*}{Method} & Error (\%)  \\
 \hline\hline
 Baseline & 28.03 \\
 Orthogonal & 27.01 \\
 SRIP~\cite{bansal2018can} & 25.80\\
 MHE~\cite{LiuNIPS18}& 26.75 \\
 HS-MHE~\cite{LiuNIPS18}   & 25.96 \\\hline\hline
 G-CoMHE & 25.08 \\
 Adv-CoMHE & 25.09\\
 RP-CoMHE & 24.39 \\
 RP-CoMHE (max) & 24.77 \\
 AP-CoMHE (alter.) & 24.95\\
 AP-CoMHE (unroll) & \textbf{24.33}\\
 \hline
 \specialrule{0em}{0pt}{-7pt}
\end{tabular}
\renewcommand{\captionlabelfont}{\footnotesize}
\caption{\scriptsize CoMHE variants on C-100.}
\label{CoMHE_variant}
\vspace{-3mm}
\end{wraptable}

\textbf{Variants of CoMHE}. We compare different variants of CoMHE with the same plain CNN-9 (Appendix~\ref{arch}). Specifically, we evaluate the baseline CNN without any regularization, half-space MHE (HS-MHE) which is the best MHE variant from \cite{LiuNIPS18}, random projection CoMHE (RP-CoMHE), RP-CoMHE (max) that uses max instead of average for loss aggregation, angle-preserving projection CoMHE (AP-CoMHE), adversarial projection CoMHE (Adv-CoMHE) and group CoMHE (G-CoMHE) on CIFAR-100. For RP, we set the projection dimension to 30 (\ie, $\thickmuskip=2mu \medmuskip=2mu k=29$) and the number of projection to 5 (\ie, $\thickmuskip=2mu \medmuskip=2mu C=5$). For AP, the number of projection is 1 and the projection dimension is set to 30. For AP, we evaluate both alternating optimization and unrolled optimization. In alternating optimization, we update the projection matrix every 10 steps of network update. In unrolled optimization, we only unroll one-step gradient in the optimization. For G-CoMHE, we construct a group with every 8 consecutive channels. All these design choices are obtained using cross-validation. We will also study how these hyperparameters affect the performance in the following experiments. The results in Table~\ref{CoMHE_variant} show that all of our proposed CoMHE variants can outperform the original half-space MHE by a large margin. The unrolled optimization in AP-CoMHE shows the significant advantage over alternating one and achieves the best accuracy. Both Adv-CoMHE and G-CoMHE achieve decent performance gain over HS-MHE, but not as good as RP-CoMHE and AP-CoMHE. Therefore, we will mostly focus on RP-CoMHE and AP-CoMHE in the remaining experiments.
\par

\begin{table}[h]
\scriptsize
  \setlength{\abovecaptionskip}{2pt}
  \setlength{\belowcaptionskip}{-7pt}
\centering
\begin{tabular}{c || c c c c c} 
\specialrule{0em}{-6pt}{0pt}
  \hline
  Projection Dimension & 10 & 20 & 30 & 40 & 80 \\
 \hline\hline
RP-CoMHE& 25.48 & 25.32 & 24.60 & \textbf{24.75} & 25.46\\
AP-CoMHE (alter.)& \textbf{25.21} & 24.60 & 24.95 & 24.97 & \textbf{24.99}\\
AP-CoMHE (unroll.)& 25.32 & \textbf{24.59} & \textbf{24.33} & 24.93 & 25.12\\
 \hline
\end{tabular}
\caption{\scriptsize Error (\%) on CIFAR-100 under different dimension of projection.}
\label{proj_dim}
\end{table}

\textbf{Dimension of projection}. We evaluate how the dimension of projection (\ie, $k$) affects the performance. We use the plain CNN-9 as the backbone network and test on CIFAR-100. We fix the number of projections in RP-CoMHE to 20. Because AP-CoMHE does not need to use multiple projections to reduce variance, we only use one projection in AP-CoMHE. Results are given in Table~\ref{proj_dim}. In general, RP-CoMHE and AP-CoMHE with different projection dimensions can consistently and significantly outperform the half-space MHE, validating the effectiveness and superiority of the proposed CoMHE framework. Specifically, we find that both RP-CoMHE and AP-CoMHE usually achieve the best accuracy when the projection dimension is 20 or 30. Since the unrolled optimization in AP-CoMHE is consistently better than the alternating optimization, we  stick to the unrolled optimization for AP-CoMHE in the remaining experiments if not otherwise specified.
\par

\setlength{\columnsep}{9pt}
\begin{wraptable}{r}[0cm]{0pt}
	\centering
	\scriptsize
	\newcommand{\tabincell}[2]{\begin{tabular}{@{}#1@{}}#2\end{tabular}}
	\setlength{\tabcolsep}{2pt}
\begin{tabular}{c || c c} 
\specialrule{0em}{-9pt}{0pt}
  \hline
  \# Proj.& RP-CoMHE& AP-CoMHE\\
 \hline\hline
1 & 25.11 & \textbf{24.33}\\
5 & \textbf{24.39} & 24.34\\
10 & 25.11 & 24.36\\ 
20 & 24.60 &  24.38\\
30 & 24.82 & 24.52\\
80 & 24.92 & 24.56\\
 \hline
 \specialrule{0em}{0pt}{-8pt}
\end{tabular}
\caption{\scriptsize Error (\%) on CIFAR-100 under different numbers of projections.}
\label{no_proj}
\vspace{-3mm}
\end{wraptable}

\textbf{Number of projections}. We evaluate RP-CoMHE under different numbers of projections. We use the plain CNN-9 as the baseline and test on CIFAR-100. Results in Table~\ref{no_proj} show that the performance of RP-CoMHE is generally not very sensitive to the number of projections. Surprisingly, we find that it is not necessarily better to use more projections for variance reduction. Our experiment show that using 5 projections can achieve the best accuracy. It may be because large variance can help the solution escape bad local minima in the optimization. Note that, we generally do not use multiple projections in AP-CoMHE, because AP-CoMHE optimizes the projection and variance reduction is unnecessary. Our results do not show performance gain by using multiple projections in AP-CoMHE.

\begin{table}[h]
\scriptsize
  \setlength{\abovecaptionskip}{1.5pt}
  \setlength{\belowcaptionskip}{-18pt}
  \setlength{\tabcolsep}{5pt}
\centering
\begin{tabular}{c || c c c c c c} 
\specialrule{0em}{5pt}{0pt}
  \hline
  Width & $t=1$ & $t=2$ & $t=4$ & $t=8$ & $t=16$ & $t=20$\\
 \hline\hline
 Baseline & 47.72 & 38.64 & 28.13 & 24.95 & 24.44 &23.77\\
 MHE~\cite{LiuNIPS18} & 36.84 & 30.05 & 26.75 & 24.05 & 23.14 & 22.36\\
HS-MHE~\cite{LiuNIPS18} & 35.16 & 29.33 & 25.96 & 23.38 & 21.83 & 21.22\\\hline\hline
 RP-CoMHE & \textbf{34.73} & \textbf{28.92} & 24.39 & \textbf{22.44} & 20.81 & 20.62\\
 AP-CoMHE & 34.89 & 29.01 & \textbf{24.33} & 22.6 & \textbf{20.72}& \textbf{20.50}\\
 \hline
\end{tabular}
\caption{\scriptsize Error (\%) on CIFAR-100 with different network width.}
\label{width}
\end{table}

\par
\textbf{Network width.} We evaluate RP-CoMHE and AP-CoMHE with different network width on CIFAR-100. We use the plain CNN-9 as our backbone network architecture, and set its filter number in Conv1.x, Conv2.x and Conv3.x (see Appendix~\ref{arch}) to $16\times t$, $32\times t$ and $64\times t$, respectively. Specifically, we test the cases where $t=1,2,4,8,16$. Taking $t=2$ as an example, then the filter numbers in Conv1.x, Conv2.x and Conv3.x are 32, 64 and 128, respectively. For RP, we set the projection dimension to 30 and the number of projection to 5. For AP, the number of projection is 1 and the projection dimension is set to 30. The results are shown in Table~\ref{width}. Note that, we use the unrolled optimization in AP-CoMHE. From Table~\ref{width}, one can observe that the performance gains of both RP-CoMHE and AP-CoMHE are very consistent and significant. With wider network, CoMHE also achieves better accuracy. Compared to the strong results of half-space MHE, CoMHE still obtains more than 1\% accuracy boost under different network width.

\vspace{0.5mm}

\setlength{\columnsep}{10pt}
\begin{wraptable}{r}[0cm]{0pt}
	\centering
	\scriptsize
	\newcommand{\tabincell}[2]{\begin{tabular}{@{}#1@{}}#2\end{tabular}}
	\setlength{\tabcolsep}{2.6pt}
\begin{tabular}{c || c c c } 
\specialrule{0em}{-11pt}{0pt}
  \hline
  Depth & CNN-6 & CNN-9 & CNN-15 \\
 \hline\hline
 Baseline & 32.08 & 28.13 & N/C\\
 MHE~\cite{LiuNIPS18} & 28.16 & 26.75 & 26.90 \\
 HS-MHE~\cite{LiuNIPS18} & 27.56 & 25.96 & 25.84\\\hline\hline
 RP-CoMHE & 26.73 & 24.39 & \textbf{24.21}\\
 AP-CoMHE & \textbf{26.55} & \textbf{24.33} & 24.55 \\
 \hline
  \specialrule{0em}{0pt}{-9pt}
\end{tabular}
\caption{\scriptsize Error on CIFAR-100 with different network depth. N/C denotes Not Converged.}
\label{depth}
\vspace{-2.5mm}
\end{wraptable}

\par
\textbf{Network depth.} We evaluate RP-CoMHE and AP-CoMHE with different network depth on CIFAR-100. We use three plain CNNs with 6, 9 and 15 convolution layers, respectively. For all the networks, we set the filter number in Conv1.x, Conv2.x and Conv3.x to 64, 128 and 256, respectively. Detailed network architectures are given in Appendix~\ref{arch}. For RP, we set the projection dimension to 30 and the number of projection to 5. For AP, the number of projection is 1 and the projection dimension is set to 30. Table~\ref{depth} shows that both RP-CoMHE and AP-CoMHE can outperform half-space MHE by a considerable margin while regularizing a plain CNN with different depth.

\setlength{\columnsep}{9pt}
\begin{wrapfigure}{r}{0.251\textwidth}
  \begin{center}
  \advance\leftskip+1mm
  \renewcommand{\captionlabelfont}{\scriptsize}
    \vspace{-0.29in}  
    \includegraphics[width=0.253\textwidth]{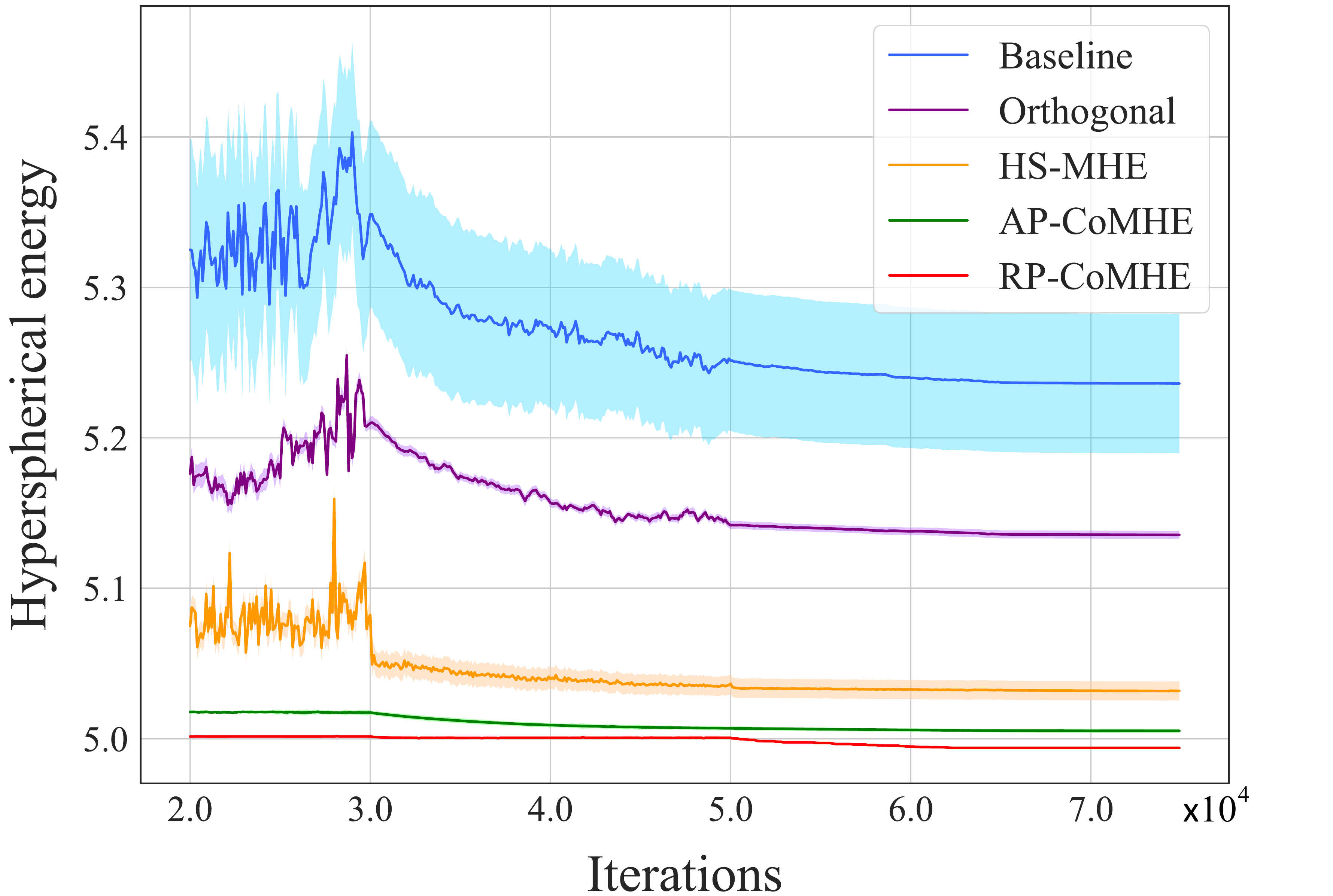}
    \vspace{-0.28in} 
    \caption{\scriptsize Hyperspherical energy during training. All networks are initialized with the same random weights, so the hyperspherical energy is the same before the training starts.}\label{training}
    \vspace{-0.2in} 
  \end{center}
\end{wrapfigure}
\par
\textbf{Effectiveness of optimization.} To verify that our CoMHE can better minimize the hyperspherical energy, we compute the hyperspherical energy $\bm{E}_2$ (Eq.~\ref{energy}) for baseline CNN and CNN regularized by orthogonal regularization, HS-MHE, RP-CoMHE and AP-CoMHE during training. Note that, we compute the original hyperspherical energy rather than the energy after projection. All the networks use exactly the same initialization (the initial hyperspherical energy is the same). The results are averaged over five independent runs. We show the hyperspherical energy after the 20000-th iteration, because at the beginning of the training, hyperspherical energy fluctuates dramatically and is unstable. Interested readers can refer to Appendix~\ref{effectiveness} for the complete energy dynamics. From Fig.~\ref{training}, one can observe that both RP-CoMHE and AP-CoMHE can better minimize the hyperspherical energy. RP-CoMHE can achieve the lowest energy with smallest standard deviation. From the absolute scale, the optimization gain is also very significant. In the high-dimensional space, the variance of hyperspherical energy is usually small (already close to the smallest energy value) and is already difficult to minimize.
\par
\vspace{0.3mm}

\setlength{\columnsep}{9pt}
\begin{wraptable}{r}[0cm]{0pt}
	\centering
	\scriptsize
	\newcommand{\tabincell}[2]{\begin{tabular}{@{}#1@{}}#2\end{tabular}}
	\setlength{\tabcolsep}{3pt}
\begin{tabular}{c || c c } 
\specialrule{0em}{-10pt}{0pt}
  \hline
  \multirow{1}{*}{Method} & C-10 & C-100  \\\hline\hline
  ResNet-110~\cite{he2015deep} & 6.61 & 25.16\\
  ResNet-1001~\cite{he2016identity} & 4.92 & \textbf{22.71}\\
 \hline\hline
 Baseline & 5.19 & 22.87\\
 Orthogonal~\cite{rodriguez2017regularizing}& 5.02 & 22.36 \\
 SRIP~\cite{bansal2018can} & 4.75 & 22.08\\
 MHE~\cite{LiuNIPS18}      & 4.72 & 22.19  \\
 HS-MHE~\cite{LiuNIPS18}   & 4.66 & 22.04 \\
 RP-CoMHE   & 4.59 & 21.82 \\
 AP-CoMHE   & \textbf{4.57} & \textbf{21.63} \\
 \hline
 \specialrule{0em}{0pt}{-9pt}
\end{tabular}
\caption{\scriptsize Error (\%) using ResNets.}
\label{cifar}
\vspace{-2.5mm}
\end{wraptable}

\textbf{ResNet with CoMHE}. All the above experiments are performed using VGG-like plain CNNs, so we use the more powerful ResNet~\cite{he2015deep} to show that CoMHE is architecture-agnostic. We use the same experimental setting in \cite{he2016identity} for fair comparison. We use a standard ResNet-32 as our baseline and the network architecture is specified in Appendix~\ref{arch}. From the results in Table~\ref{cifar}, one can observe that both RP-CoMHE and AP-CoMHE can consistently outperform half-space MHE, showing that CoMHE can boost the performance across different network architectures. More interestingly, the ResNet-32 regularized by CoMHE achieves impressive accuracy and is able to outperform the 1001-layer ResNet by a large margin. Additionally, we note that from Table~\ref{width}, we can regularize a plain VGG-like 9-layer CNN with CoMHE and achieve 20.81\% error rate, which is nearly 2\% improvement over the 1001-layer ResNet.

\vspace{-3.5mm}
\subsubsection{Large-scale Recognition on ImageNet-2012}
\vspace{-1.5mm}

\setlength{\columnsep}{9pt}
\begin{wraptable}{r}[0cm]{0pt}
	\centering
	\scriptsize
	\newcommand{\tabincell}[2]{\begin{tabular}{@{}#1@{}}#2\end{tabular}}
	\setlength{\tabcolsep}{2pt}
\begin{tabular}{c || c c c} 
\specialrule{0em}{-10pt}{0pt}
  \hline
  \multirow{1}{*}{Method} & Res-18 & Res-34 & Res-50\\
 \hline\hline
 baseline & 32.95 & 30.04 & 25.30\\
 Orthogonal~\cite{rodriguez2017regularizing}& 32.65 & 29.74 & 25.19\\
 Orthnormal~\cite{liu2017hyper} & 32.61 & 29.75 & 25.21\\
  SRIP~\cite{bansal2018can} & 32.53 & 29.55 & 24.91\\
 MHE~\cite{LiuNIPS18}      & 32.50 & 29.60  & 25.02\\
 HS-MHE~\cite{LiuNIPS18}   & 32.45 & 29.50 & 24.98\\\hline\hline
 RP-CoMHE   & 31.90 & 29.38 & \textbf{24.51}\\
 AP-CoMHE   & \textbf{31.80} & \textbf{29.32} & 24.53\\
 \hline
 \specialrule{0em}{-9pt}{0pt}
\end{tabular}
\captionof{table}{\scriptsize Top-1 center crop error on ImageNet.}
\label{imagenet}
\vspace{-2.5mm}
\end{wraptable}

We evaluate CoMHE for image recognition on ImageNet-2012~\cite{russakovsky2014imagenet}. We perform the experiment using ResNet-18, ResNet-34 and ResNet-50, and then report the top-1 validation error (center crop) in Table~\ref{imagenet}. Our results show consistent and significant performance gain of CoMHE in all ResNet variants. Compared to the baselines, CoMHE can reduce the top-1 error for more than 1\%. Since the computational overhead of CoMHE is almost neglectable, the performance gain is obtained without many efforts. Most importantly, as a plug-in regularization, CoMHE is shown to be architecture-agnostic and produces considerable accuracy gain in most circumstances.

\setlength{\columnsep}{10pt}
\begin{wrapfigure}{r}{0.225\textwidth}
  \begin{center}
  \advance\leftskip+1mm
  \renewcommand{\captionlabelfont}{\scriptsize}
    \vspace{-0.155in} 
    \includegraphics[width=1.47in]{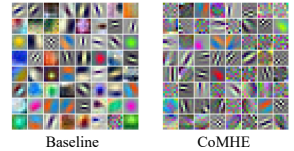}
    \vspace{-0.16in} 
  \caption{\scriptsize Visualized first-layer filters.}\label{filter}
    \vspace{-0.235in} 
  \end{center}
\end{wrapfigure}

\par
Besides the accuracy improvement, we also visualize in Fig.~\ref{filter} the 64 filters in the first-layer learned by the baseline ResNet and the CoMHE-regularized ResNet. The filters look quite different after we regularize the network using CoMHE. Each filter learned by baseline focuses on a particular local pattern (\eg, edge, color and shape) and each one has a clear local semantic meaning. In contrast, filters learned by CoMHE focuses more on edges, textures and global patterns which do not necessarily have a clear local semantic meaning. However, from a representation basis perspective, having such global patterns may be beneficial to the recognition accuracy. We also observe that filters learned by CoMHE pay less attention to color.

\setlength{\columnsep}{9pt}
\begin{wraptable}{r}[0cm]{0pt}
	\centering
	\scriptsize
	\newcommand{\tabincell}[2]{\begin{tabular}{@{}#1@{}}#2\end{tabular}}
	\setlength{\tabcolsep}{4pt}
\begin{tabular}{c || c c c} 
\specialrule{0em}{-11pt}{0pt}
  \hline
  \multirow{1}{*}{Method} & PN & PN (T) & PN++ \\
 \hline\hline
 Original  & 87.1 & 89.20 & 90.07 \\
 MHE~\cite{LiuNIPS18}    & 87.31 & 89.33 & 90.25 \\
 HS-MHE~\cite{LiuNIPS18}    & 87.44 & 89.41 & 90.31 \\
 RP-CoMHE   & 87.82 & 89.69 & 90.52 \\
 AP-CoMHE   & \textbf{87.85} & \textbf{89.70} & \textbf{90.56}\\
 \hline
 \specialrule{0em}{0pt}{-9pt}
\end{tabular}
\caption{\scriptsize Accuracy (\%) on ModelNet-40.}
\label{pointnet}
\vspace{-2.5mm}
\end{wraptable}

\vspace{-0.4mm}
\subsection{Point Cloud Recognition}
\vspace{-0.8mm}
We evaluate CoMHE on point cloud recognition. Our goal is to validate the effectiveness of CoMHE on a totally different network architecture with a different form of input data structure, rather than achieving state-of-the-art performance on point cloud recognition. To this end, we conduct experiments on widely used neural networks that handles point clouds: PointNet~\cite{qi2017pointnet} (PN) and PointNet++~\cite{qi2017pointnet++} (PN++). We combine half-space MHE, RP-CoMHE and AP-CoMHE into PN (without T-Net), PN (with T-Net) and PN++. More experimental details are given in Appendix~\ref{arch}. We test the performance on ModelNet-40~\cite{wu20153d}. Specifically, since PN can be viewed as $\thickmuskip=2mu \medmuskip=2mu 1\times 1$ convolutions before the max pooling layer, we can apply all these MHE variants similarly to CNN. After the max pooling layer, there is a standard fully connected network where we can still apply the MHE variants. We compare the performance of regularizing PN and PN++ with half-space MHE, RP-CoMHE or AP-CoMHE. Table~\ref{pointnet} shows that all MHE variants consistently improve PN and PN++, while RP-CoMHE and AP-CoMHE again perform the best among all. We demonstrate that CoMHE is generally useful for different types of input data (not limit to images) and different types of neural networks. CoMHE is also useful in graph neural networks (Appendix~\ref{graph}).

\vspace{-1mm}
\section{Concluding Remarks} 
\vspace{-1mm}
Since naively minimizing hyperspherical energy yields suboptimal solutions, we propose a novel framework which projects the neurons to suitable spaces and minimizes the energy there. Experiments validate CoMHE's superiority.

\vspace{1.4mm}
{
\begin{spacing}{0.75}
{\footnotesize \noindent\textbf{Acknowledgements.} The research is partially supported by NSF BigData program under IIS-1838200, NSF CPS program under CMMI-1932187, and USDOT via C2SMART under 69A3551747124.}
\end{spacing}
}

\clearpage
{
\small
\bibliographystyle{unsrt}
\bibliography{egbib}
}

\newpage
\onecolumn
\begin{appendix}
\begin{center}
{\LARGE \textbf{Appendix}}
\end{center}

\vspace{-3mm}
\section{Experimental Details}\label{arch}
\vspace{-1mm}
\begin{table*}[h]
	\renewcommand{\captionlabelfont}{\footnotesize}
	\centering
	\setlength{\abovecaptionskip}{4pt}
	\setlength{\belowcaptionskip}{-5pt}
	\scriptsize
	\begin{tabular}{|c|c|c|c|c|c|c|}
		\hline
		Layer & CNN-6 & CNN-9 & CNN-15 \\
		\hline\hline
		Conv1.x  & [3$\times$3, 64]$\times$2 & [3$\times$3, 64]$\times$3 & [3$\times$3, 64]$\times$5 \\\hline
		Pool1&\multicolumn{3}{c|}{2$\times$2 Max Pooling, Stride 2}\\\hline
		Conv2.x  & [3$\times$3, 128]$\times$2 & [3$\times$3, 128]$\times$3 & [3$\times$3, 128]$\times$5 \\\hline
		Pool2 & \multicolumn{3}{c|}{2$\times$2 Max Pooling, Stride 2}\\\hline
		Conv3.x & [3$\times$3, 256]$\times$2 & [3$\times$3, 256]$\times$3 & [3$\times$3, 256]$\times$5 \\\hline
		Pool3 & \multicolumn{3}{c|}{2$\times$2 Max Pooling, Stride 2}\\\hline
		Fully Connected & 256 & 256 & 256  \\\hline
	\end{tabular}
	\caption{\footnotesize Our plain CNN architectures with different convolutional layers. Conv1.x, Conv2.x and Conv3.x denote convolution units that may contain multiple convolution layers. E.g., [3$\times$3, 64]$\times$3 denotes 3 cascaded convolution layers with 64 filters of size 3$\times$3.}\label{netarch}
\end{table*}

\begin{table*}[h]
	\renewcommand{\captionlabelfont}{\footnotesize}
	\newcommand{\tabincell}[2]{\begin{tabular}{@{}#1@{}}#2\end{tabular}}
	\centering
	\setlength{\abovecaptionskip}{4pt}
	\setlength{\belowcaptionskip}{5pt}
	\scriptsize
	\begin{tabular}{|c|c|c|c|c|}
		\hline
		Layer & ResNet-32 for CIFAR-10/100 & ResNet-18 for ImageNet-2012 & ResNet-34 for ImageNet-2012\\
		\hline\hline
		Conv0.x & N/A & \tabincell{c}{[7$\times$7, 64], Stride 2\\3$\times$3, Max Pooling, Stride 2} & \tabincell{c}{[7$\times$7, 64], Stride 2\\3$\times$3, Max Pooling, Stride 2}\\\hline
		Conv1.x & \tabincell{c}{[3$\times$3, 64]$\times$1\\$\left[\begin{aligned} &3\times 3, 64\\&3\times3, 64\end{aligned}\right]\times 5$} & $\left[\begin{aligned} &3\times 3, 64\\&3\times3, 64\end{aligned}\right]\times 2$ & $\left[\begin{aligned} &3\times 3, 64\\&3\times3, 64\end{aligned}\right]\times 3$\\ \hline
		Conv2.x  & \tabincell{c}{$\left[\begin{aligned} &3\times3, 128\\&3\times3, 128\end{aligned}\right]\times 5$}  & $\left[\begin{aligned} &3\times 3, 128\\&3\times3, 128\end{aligned}\right]\times 2$ & $\left[\begin{aligned} &3\times 3, 128\\&3\times3, 128\end{aligned}\right]\times 4$\\\hline
		Conv3.x  & \tabincell{c}{$\left[\begin{aligned} &3\times3, 256\\&3\times3, 256\end{aligned}\right]\times 5$} & $\left[\begin{aligned} &3\times 3, 256\\&3\times3, 256\end{aligned}\right]\times 2$ & $\left[\begin{aligned} &3\times 3, 256\\&3\times3, 256\end{aligned}\right]\times 6$ \\\hline
		Conv4.x & N/A  & $\left[\begin{aligned} &3\times 3, 512\\&3\times3, 512\end{aligned}\right]\times 2$ & $\left[\begin{aligned} &3\times 3, 512\\&3\times3, 512\end{aligned}\right]\times 3$ \\\hline
		& \multicolumn{3}{c|}{Average Pooling}  \\\hline
	\end{tabular}
	\caption{\footnotesize Our ResNet architectures with different convolutional layers. Conv0.x, Conv1.x, Conv2.x, Conv3.x and Conv4.x denote convolution units that may contain multiple convolutional layers, and residual units are shown in double-column brackets. Conv1.x, Conv2.x and Conv3.x usually operate on different size feature maps. These networks are essentially the same as \cite{he2015deep}, but some may have a different number of filters in each layer. The downsampling is performed by convolutions with a stride of 2. E.g., [3$\times$3, 64]$\times$4 denotes 4 cascaded convolution layers with 64 filters of size 3$\times$3, S2 denotes stride 2. }\label{netarch2}
\end{table*}

\textbf{Image recognition settings.} The network architectures used in the paper are elaborated in Table~\ref{netarch} and Table~\ref{netarch2}. For CIFAR-10 and CIFAR-100, we use batch size 128. We start with learning rate 0.1, divide it when the performance is saturated. For ImageNet-2012, we use batch size 64 and start with learning rate 0.1. The learning rate is divided by 10 when the performance is saturated, and the training is terminated at 500k iterations. For ResNet-50 on ImageNet-2012, we use exactly the same architecture as \cite{he2015deep}. Note that, for all the compared methods, we always use the best possible hyperparameters to make sure that the comparison is fair. The baseline has exactly the same architecture and training settings as the one that CoMHE uses. For both half-space MHE and all the variants of CoMHE in hidden layers, we set the weighting hyperparameter as $1$ in all experiments. \cite{LiuNIPS18} already shows that MHE type of losses are not senstitive to the weighting hyperparameter.
We use $\thickmuskip=2mu \medmuskip=2mu 1e-5$ for the orthonormal and orthogonal regularizations (the hyperparameter is obtained via cross-validation). If not otherwise specified, standard $\ell_2$ weight decay ($\thickmuskip=2mu \medmuskip=2mu 1e-4$) is applied to all the neural network including baselines and the networks that use MHE regularization. 
Note that, all the neuron weights in the neural networks used in the paper are not normalized (unless otherwise specified), but both MHE and CoMHE will normalize the neuron weights while computing the regularization loss. For all experiments, we use $s=2$ in both MHE and CoMHE. As a result, \emph{CoMHE does not need to modify any component of the original neural networks, and it can simply be viewed as an extra regularization loss that can boost the performance}. All the network architectures are implemented by ourselves, so there might be performance difference between our implementation and the original paper due to some different network and optimization hyperparameters. For example, due to the limitation of computation resources, our batch size for ImageNet-2012 is 64 batch size, which might has some performance loss. However, the network and training settings for the baseline and all the compared regularizations are the same, which ensures the fairness of our experiments. In ImageNet classification, we emphasize that our purpose is to compare the gain brought by different regularizations rather than achieving the state-of-the-art performance. We implement the data augmentation in the ImageNet experiment, following AlexNet~\cite{krizhevsky2012imagenet} and SphereNet~\cite{liu2017hyper}, so the accuracy may be lower than using the more complicated data augmentation used in original ResNet~\cite{he2015deep}.
\par
\textbf{Point cloud recognition settings.} For all the PointNet and PointNet++ experiments, we exactly follow the same setting in the original papers~\cite{qi2017pointnet,qi2017pointnet++} and their official repositories\footnote{\url{https://github.com/charlesq34/pointnet}} \footnote{\url{https://github.com/charlesq34/pointnet2}}. Specifically, we combine CoMHE regularization to neurons in all the $\thickmuskip=2mu \medmuskip=2mu 1\times1$ convolution layers before the max pooling layer and the multi-layer perceptron classifier after the max pooling layer. All the regularization is added without changing any components in PointNet. For PointNet experiments, we use point number 1024, batch size 32 and Adam optimizer started with learning rate 0.001, the learning rate will decay by 0.7 every 200k iterations, and the training is terminated at 250 epochs. For PointNet++ experiments, since the MRG (multi-resolution grouping) model is not provided in the official repository, we use the SSG (single scale grouping) model as baseline. Specifically, we use point number 1024, batch size 16 and Adam optimizer started with learning rate 0.001, the learning rate will decay by 0.7 every 200k iterations, and the training is terminated at 251 epochs. For all experiments, we use $s=1$ in both MHE and CoMHE.
\par
We evaluate on PointNet with T-Net and without T-Net in order to demonstrate that CoMHE is not sensitive to architecture modifications. We follow all the default hyperparameters used in the official released code, and the only difference is that we further combine an additional regularization loss for the neurons in each layer. One can observe that CoMHE consistently performs better than half-space MHE~\cite{LiuNIPS18}.
\par
Besides PointNet, we combine CoMHE to PointNet++~\cite{qi2017pointnet++} and further show the improvement of generalization introduced by CoMHE is agnostic to the architecture. We evaluate PointNet++ with and without CoMHE on ModelNet-40. Note that, we exactly follow the released code in the official repository where PointNet++ uses the single scale grouping model. Because the original paper~\cite{qi2017pointnet++} uses the multi-resolution grouping model, the baseline performance reported in our paper is not as good as the accuracy reported in the original paper. However, our purpose is to validate the effectiveness of CoMHE, so we only focus on the performance gain. One can observe that CoMHE achieves about 0.5\% accuracy gain, while half-space MHE~\cite{LiuNIPS18} only has about 0.2\% accuracy gain.

\newpage
\section{Experimental Details and Full Results of Different Hyperspherical Minimization Strategies}\label{exp1_detail}

\subsection{General experimental details} 

\begin{table*}[h]
	\renewcommand{\captionlabelfont}{\footnotesize}
	\centering
	\setlength{\abovecaptionskip}{5pt}
	\setlength{\belowcaptionskip}{0pt}
	\scriptsize
	\begin{tabular}{|c|c|c|}
		\hline
		Layer & CNN-3 & CNN-9 \\
		\hline\hline
		Conv1.x  & [3$\times$3, 32]$\times$1 & [3$\times$3, 32]$\times$3 \\\hline
		Pool1&\multicolumn{2}{c|}{2$\times$2 Max Pooling, Stride 2}\\\hline
		Conv2.x  & [3$\times$3, 64]$\times$1 & [3$\times$3, 64]$\times$3 \\\hline
		Pool2 & \multicolumn{2}{c|}{2$\times$2 Max Pooling, Stride 2}\\\hline
		Conv3.x & [3$\times$3, 64]$\times$1 & [3$\times$3, 64]$\times$3 \\\hline
		Pool3 & \multicolumn{2}{c|}{2$\times$2 Max Pooling, Stride 2}\\\hline
		Fully Connected & 64 & 64  \\\hline
	\end{tabular}
	\caption{\footnotesize Our small plain CNN architectures with different convolutional layers for the illustrative experiment in Fig.~\ref{rotation}. Conv1.x, Conv2.x and Conv3.x denote convolution units that may contain multiple convolution layers. E.g., [3$\times$3, 64]$\times$3 denotes 3 cascaded convolution layers with 64 filters of size 3$\times$3.}\label{small_netarch}
\end{table*}

The training details are the same as the CIFAR-100 experiment described in Appendix~\ref{arch}, except that we use different network structure here. All the optimizations of CNNs use the Stochastic gradient descent with momentum $0.9$. The number of iteration and learning rate decay exactly follows the CIFAR-100 experiment in Section~\ref{cifar_explore}. The results in Fig.~\ref{rotation} are obtained with the CNN-9 described in Table~\ref{small_netarch}. In order to show that the conclusion obtained using CNN-9 is architecture-agnostic, we also conduct the same experiment on CNN-3. The width of CNN-3 and CNN-9 is smaller than the architectures we used in Section~\ref{cifar_explore}, because the orthogonal training consumes more GPU memory when the size of convolution kernels gets larger. Since the width of CNNs is still the same for all the compared regularizations, it will not affect the validity of our conclusions. For standard, MHE and CoMHE training, the weight decay will be used by default. For rotation training, the weight decay is no longer needed since it does not need to learn the weights for neurons. Note that, all networks (with different regularization) are initialized with the same weights and therefore \textbf{have the same hyperspherical energy at the beginning}.

\subsection{Details of different training strategies}
\textbf{Standard Training}. In standard training, we use the conventional end-to-end training for all the neurons in the CNN via back-propagation. The standard training is the same as the way that baselines are trained in Section~\ref{expandresult}.
\par

\begin{figure}[h]
  \centering
  \renewcommand{\captionlabelfont}{\footnotesize}
  \setlength{\abovecaptionskip}{3pt}
  \setlength{\belowcaptionskip}{0pt}
\includegraphics[width=4.4in]{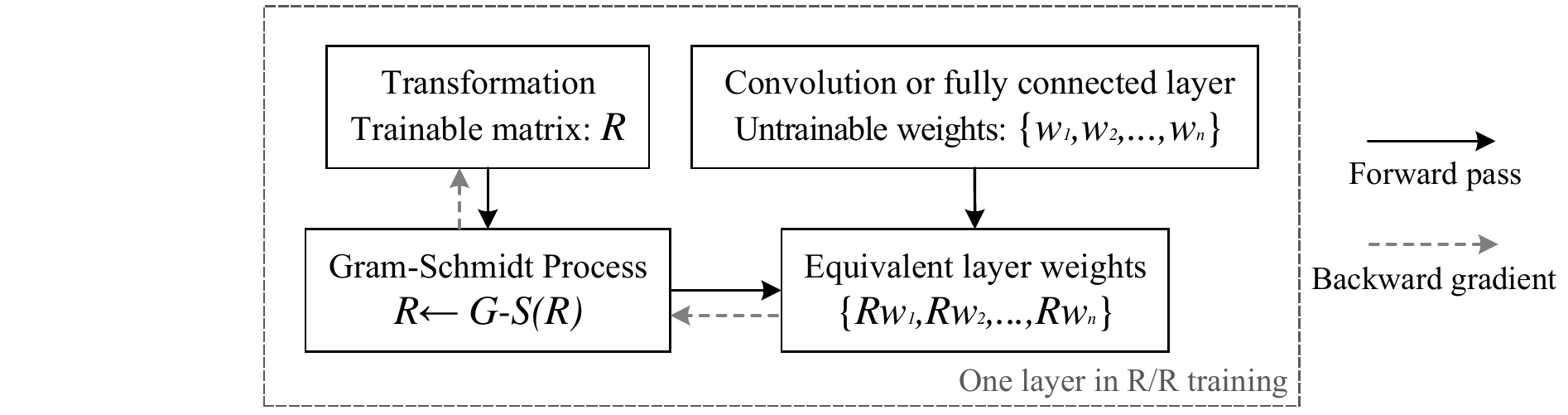}
  \caption{\footnotesize Illustration of one layer in rotation/reflection training.}\label{rr_app}
\end{figure}

\textbf{Rotation Training}. We introduce an interesting baseline here for better comparison to our CoMHE. The core of rotation training~\cite{OPT2020} is to learn an orthogonal matrix for the neurons in the same layer with the weights of these neurons being fixed. Such an orthogonal matrix is learned individually in every layer except the classifier layer (\ie, the final layer that outputs the class logic). The classifier layer is still learned from scratch via back-propagation. Rotation training is a special case of the orthogonal over-parameterized training in \cite{OPT2020}. Specifically, we denote $N$ neurons in the $i$-th (convolution or fully-connected) layer as $\{\bm{w}_1^{(i)},\bm{w}_2^{(i)},\cdots,\bm{w}_N^{(i)}\}\in\mathbb{R}^d$. After randomly initializing these neuron weights with the method in \cite{he2015delving}, we will fix $\{\bm{w}_1^{(i)},\bm{w}_2^{(i)},\cdots,\bm{w}_N^{(i)}\}$ and make them \textbf{untrainable} during the entire training procedure. We will learn an orthogonal matrix $\bm{R}^{(i)}\in\mathbb{R}^{d\times d}$ for the neurons in the $i$-th layer such that the equivalent neurons become $\{\bm{R}^{(i)}\bm{w}_1^{(i)},\bm{R}^{(i)}\bm{w}_2^{(i)},\cdots,\bm{R}^{(i)}\bm{w}_N^{(i)}\}$. The angle between the $j$-th neuron and $k$-th neuron is preserved after the rotation/reflection, because we have
\begin{equation}
\small
\cos(\theta_{(\bm{R}\bm{w}_j,\bm{R}\bm{w}_k)})=\frac{(\bm{R}\bm{w}_j)^\top \bm{R}\bm{w}_k}{\|\bm{R}\bm{w}_j\|\cdot\|\bm{R}\bm{w}_k\|}=\frac{\bm{w}_j^\top\bm{w}_k}{\|\bm{w}_j\|\cdot\|\bm{w}_k\|}=\cos(\theta_{(\bm{w}_j,\bm{w}_k)}).
\end{equation}
Therefore, rotation training only learns the orthogonal matrices $\{\bm{R}^{(1)},\bm{R}^{(2)},\cdots,\bm{R}^{(L)}\}$ for neurons of all the convolution and fully-connected layers except the classifier layer. This training strategy will always keep the hyperspherical energy the same during the training process.
\par
For the implementation of rotation training, we use the Gram-Schmidt process to orthonormalizing all the learnable matrices $\bm{R}^{(i)},\forall i$ before multiplying them to the neuron weights. Since the Gram-Schmidt process is differentiable, we can directly insert it to process the learnable matrices $\bm{R}^{(i)},\forall i$ in the forward pass. We show an overview of forward and backward pass in one layer of rotation training in Fig.~\ref{rr_app} to demonstrate the procedure how we orthonormalize the matrices $\bm{R}^{(i)},\forall i$ and apply them to the fixed neuron weights. 

\par
\textbf{MHE Training}. MHE training is to train the neurons with both data fitting loss and MHE regularization loss from scratch, following the same procedure in \cite{LiuNIPS18}.
\par
\textbf{CoMHE Training} Similar to MHE training, CoMHE training is to train the neurons with both data fitting loss and CoMHE regularization loss (including RP-CoMHE and AP-CoMHE) from scratch. Details are given in the main paper.

\subsection{Experimental results}
\textbf{Experiments on CIFAR-100 with CNN-9 (BatchNorm) (also shown in Fig.~\ref{rotation})}. We first conduct experiments on CIFAR-100 with CNN-9 as the backbone architecture. For all the compared methods, we use batch normalization. Note that, the experimental results here are the extended results of Fig.~\ref{rotation}. We compute the hyperspherical energy of $N$ neurons using the following definition of half-space hyperspherical energy ($s=1$)~\cite{LiuNIPS18} (the same as the regularization loss):
\begin{equation}\label{hs_energy_loss}
\small
\begin{aligned}
    \bm{E} = \frac{1}{2N(2N-1)}\sum_{i=1}^{2N}\sum_{j=1,j\neq i}^{2N}  \frac{1}{\norm{\hat{\bm{w}}_i-\hat{\bm{w}}_j}}
\end{aligned}
\end{equation}
where $\hat{\bm{w}}_{N+i}=-\hat{\bm{w}}_i,\ 0\leq i \leq N$. This is the hyperspherical energy of neurons in one layer. The total hyperspherical energy needs to sum up the energy from all the layers. We show the hyperspherical energy and the accuracy v.s. iteration in Fig.~\ref{c100cnn9bn}.

The results in Fig.~\ref{c100cnn9bn} shows that rotation training can largely improve the accuracy compared to the baseline, indicating that the hyperspherical energy can characterize the generalization and lower hyperspherical energy leads to better generalization. The rotation training is able to perform similarly to HS-MHE, showing the advantage of low hyperspherical energy. It also shows that the performance of the original MHE (\ie, HS-MHE) can be achieved by a simple rotation/reflection training strategy.
\par
\vspace{2mm}

\textbf{Experiments on CIFAR-100 with CNN-3 (BatchNorm)}. To show that the same behavior will also happen in different network structure, we conduct experiments on CIFAR-100 with a 3-layer CNN as shown in Table~\ref{small_netarch}. Batch normalization is also used. The results in Fig.~\ref{c100cnn3bn} confirm that rotation training performs better than the baseline and MHE but still worse than our proposed CoMHE. The results verify that hyperspherical energy is an important generaliability indicator for trained networks.

\par
\vspace{2mm}
\textbf{Experiments on CIFAR-100 with CNN-9 (no BatchNorm)}. To show that batch normalization does not affect our conclusion, we also conduct experiments on CIFAR-100 without using batch normalization. We use the CNN-9 architecture as shown in Table~\ref{small_netarch}. The results in Fig.~\ref{c100cnn9nobn} show that our conclusion holds even without batch normalization. Hyperspherical energy plays an important role in the network generalization. In general, lower hyperspherical energy leads to better generalization, even though such low hyperspherical energy (in rotation training) is achieved by zero-mean Gaussian initialization.
\par
\vspace{2mm}
\textbf{Experiments on CIFAR-10 with CNN-9 (BatchNorm)}. To show that the same empirical behavior is consistent in different dataset, we further conduct experiments on CIFAR-10. We use CNN-9 with batch normalization as our backbone architecture. The results in Fig.~\ref{c10cnn9bn} show that rotation training still performs much better than the baseline and slightly worse than CoMHE. Most interestingly, rotation training can even perform better than the original MHE.

\subsection{Conclusion and Discussion}
We have extensively tested the hyperspherical energy and accuracy of standard, rotation/reflection, MHE, and CoMHE training under multiple circumstances. The empirical evidences consistently show that hyperspherical energy is of great importance and is able to indicate the potential generalizability of a trained network. Even if we use randomly initialized neurons with low hyperspherical energy, we can still have impressive performance (better than MHE) if proper rotations/reflections of these neurons are learned. Note that rotation/reflection will not change the hyperspherical energy. Therefore, how to effectively minimize the hyperspherical energy is of great significance and is also the central focus of CoMHE.

\begin{figure}[t]
  \centering
  \renewcommand{\captionlabelfont}{\footnotesize}
  \setlength{\abovecaptionskip}{3pt}
  \setlength{\belowcaptionskip}{0pt}
\includegraphics[width=6.75in]{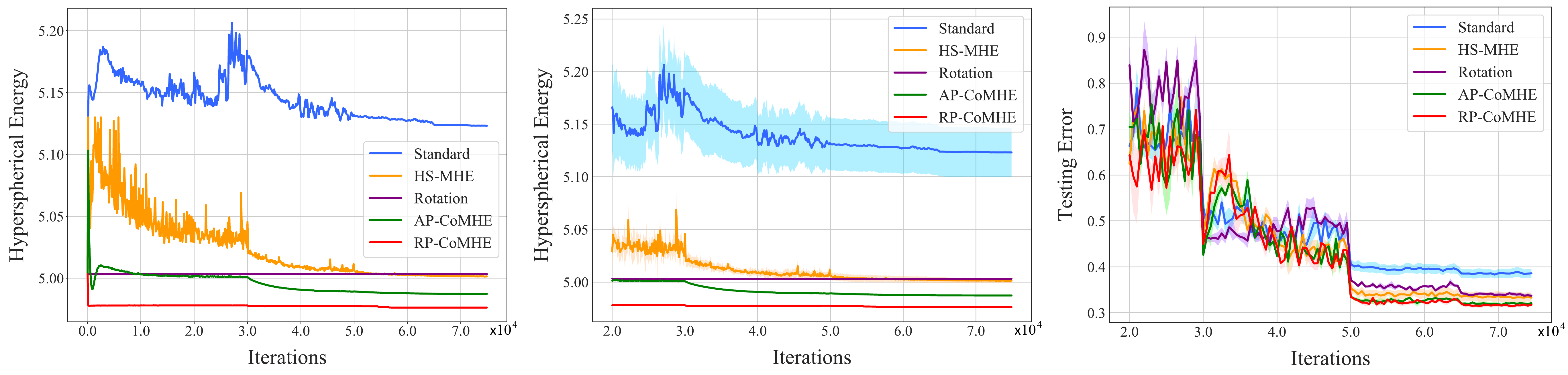}
  \caption{\footnotesize Results on CIFAR-100 with CNN-9 (BatchNorm). Left: hyperspherical energy v.s. iteration during the entire training. Middle: hyperspherical energy v.s. iteration after the 20000-th iterations (with standard deviation). Right: Testing Error on CIFAR-100 (with standard deviation).}\label{c100cnn9bn}
\end{figure}

\begin{figure}[t]
  \centering
  \renewcommand{\captionlabelfont}{\footnotesize}
  \setlength{\abovecaptionskip}{3pt}
  \setlength{\belowcaptionskip}{0pt}
\includegraphics[width=6.75in]{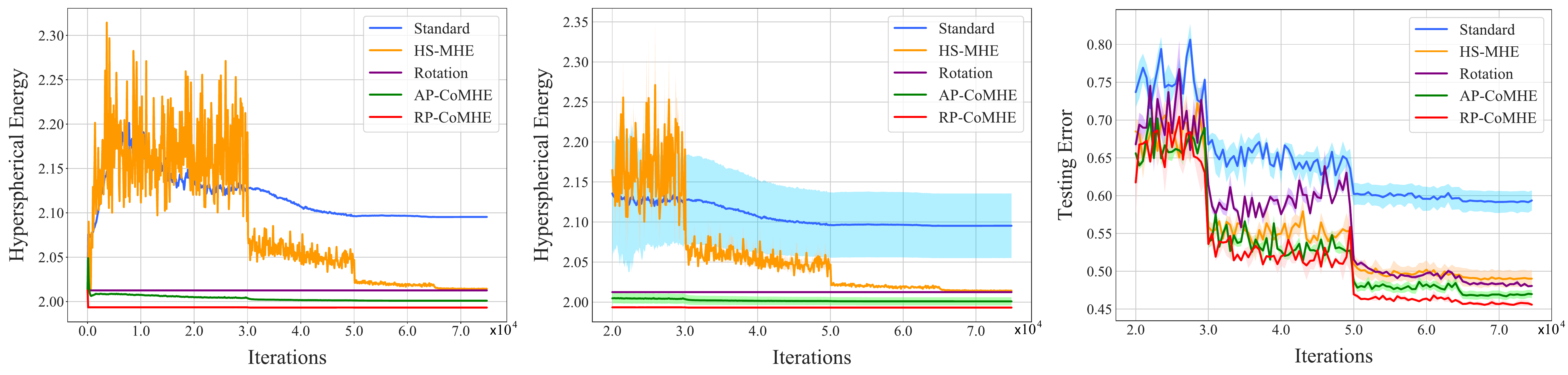}
  \caption{\footnotesize Results on CIFAR-100 with CNN-3 (BatchNorm). Left: hyperspherical energy v.s. iteration during the entire training. Middle: hyperspherical energy v.s. iteration after the 20000-th iterations (with standard deviation). Right: Testing Error on CIFAR-100 (with standard deviation).}\label{c100cnn3bn}
\end{figure}

\begin{figure}[t]
  \centering
  \renewcommand{\captionlabelfont}{\footnotesize}
  \setlength{\abovecaptionskip}{3pt}
  \setlength{\belowcaptionskip}{0pt}
\includegraphics[width=6.75in]{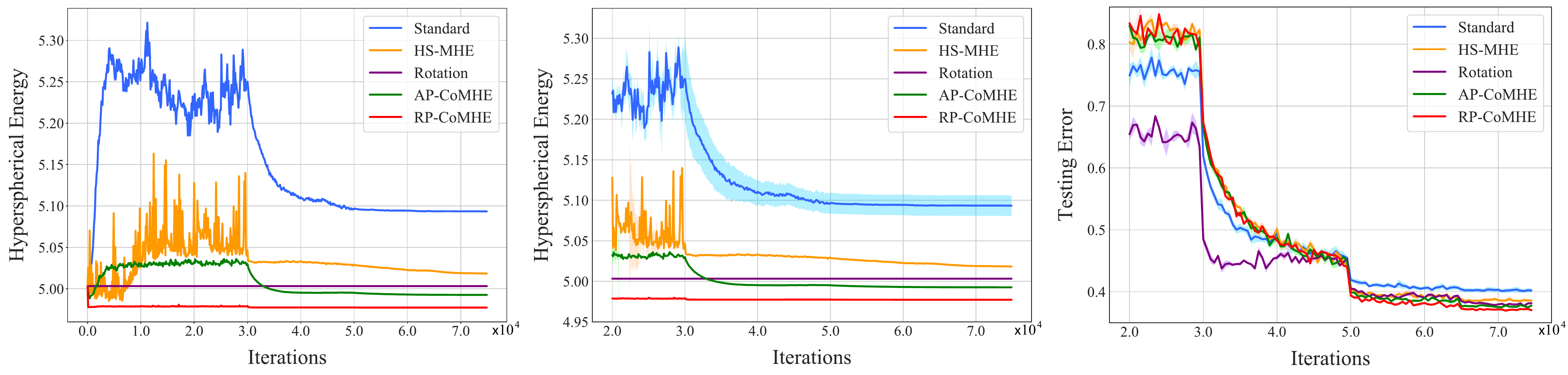}
  \caption{\footnotesize Results on CIFAR-100 with CNN-9 (no BatchNorm is applied). Left: hyperspherical energy v.s. iteration during the entire training. Middle: hyperspherical energy v.s. iteration after the 20000-th iterations (with standard deviation). Right: Testing Error on CIFAR-100 (with standard deviation).}\label{c100cnn9nobn}
\end{figure}

\begin{figure}[t]
  \centering
  \renewcommand{\captionlabelfont}{\footnotesize}
  \setlength{\abovecaptionskip}{3pt}
  \setlength{\belowcaptionskip}{0pt}
\includegraphics[width=6.75in]{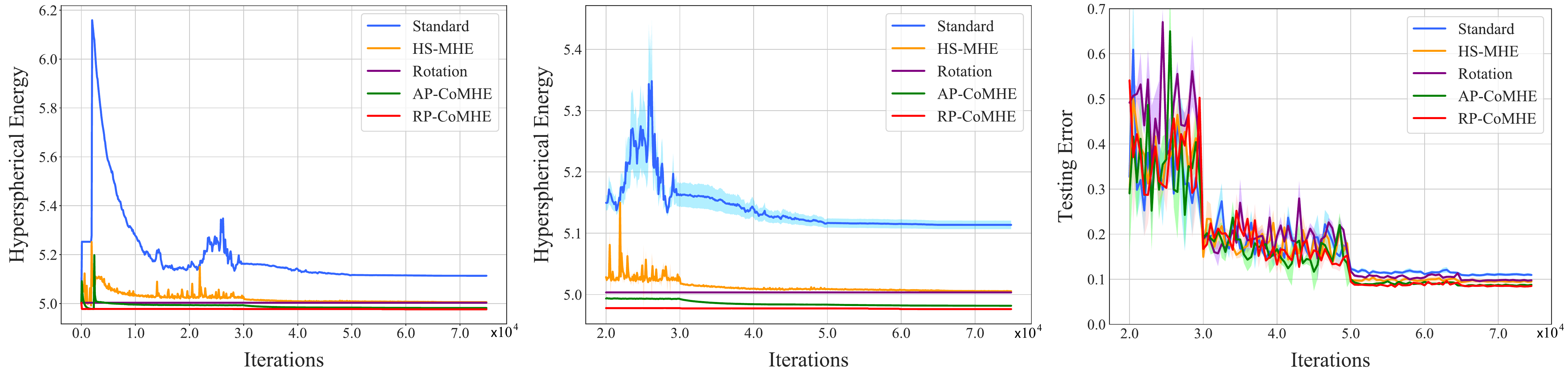}
  \caption{\footnotesize Results on CIFAR-10 with CNN-9 (BatchNorm). Left: hyperspherical energy v.s. iteration during the entire training. Middle: hyperspherical energy v.s. iteration after the 20000-th iterations (with standard deviation). Right: Testing Error on CIFAR-10 (with standard deviation).}\label{c10cnn9bn}
\end{figure}

\clearpage
\newpage

\section{Two Random Vectors Are Approximately Orthogonal in High Dimensions}\label{hd_orth}
We have two random uniform vectors $\frac{\bm{X}}{\|\bm{X}\|}$ and $\frac{\bm{Y}}{\|\bm{Y}\|}$ where $\bm{X}$ and $\bm{Y}$ are normal distributions. Then the inner product of these two independent unit vectors is $\frac{\langle\bm{X},\bm{Y}\rangle}{\|\bm{X}\|\|\bm{Y}\|}$. When $n\rightarrow +\infty$, according to the law of large numbers, we have that $\frac{\bm{X}}{\sqrt{n}}\rightarrow 1$ almost surely. By the central limit theorem, $\frac{\langle\bm{X},\bm{Y}\rangle}{\sqrt{n}}$ converges in distribution to a standard one-dimensional normal distribution. Therefore, we have in distribution that
\begin{equation}
    \sqrt{n}\cdot\langle\bm{U},\bm{V}\rangle\rightarrow z
\end{equation}
where $\bm{U}=\frac{\bm{X}}{\|\bm{X}\|}$, $\bm{V}=\frac{\bm{Y}}{\|\bm{Y}\|}$ and $z$ follows a normal distribution. Then for every $\epsilon>0$, we have that 
\begin{equation}
P(|\langle\bm{U},\bm{V}\rangle|)\rightarrow0
\end{equation}
which implies that the probability that $\bm{U}$ and $\bm{V}$ are nearly orthogonal approaches to $1$ when $n\rightarrow +\infty$. Similarly, we can also conclude that $k$ independent uniform unit vectors on the hypersphere are nearly orthogonal with very high probability when the dimension becomes higher.

\section{Johnson-Lindenstrauss Lemma}\label{app_jll}
\begin{lemma}[Johnson-Lindenstrauss Lemma~\cite{dasgupta2003elementary,kaban2015improved}]\label{jll}
Let $w_1,w_2\in\mathbb{R}^d$ be vectors, and $\bm{P}\in\mathcal{R}^{k\times d},k<d$ be a random projection matrix with entries i.i.d. drawn from a $0$-mean $\sigma$-subgaussian distribution. With $\bm{P}\bm{w}_1,\bm{P}\bm{w}_2\in\mathbb{R}^k$ being the projected vectors of $\bm{w}_1,\bm{w}_2$, then, $\forall \epsilon\in(0,1)$,
\begin{equation}
\begin{aligned}
(1-\epsilon)\norm{\bm{w}_1-\bm{w}_2}^2k\sigma^2&<\norm{\bm{P}\bm{w}_1-\bm{P}\bm{w}_2}^2<(1+\epsilon)\norm{\bm{w}_1-\bm{w}_2}^2k\sigma^2
\end{aligned}
\end{equation}
holds with probability at least $1-2\exp({-\frac{k\epsilon^2}{8}})$.
\end{lemma}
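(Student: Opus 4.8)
The plan is to reduce the two-point statement to a one-sided concentration question about a single projected vector. First I would set $\bm{u} = \bm{w}_1 - \bm{w}_2$ and observe that, by linearity, $\bm{P}\bm{w}_1 - \bm{P}\bm{w}_2 = \bm{P}\bm{u}$, so the claim is equivalent to the two-sided bound $(1-\epsilon)\|\bm{u}\|^2 k\sigma^2 < \|\bm{P}\bm{u}\|^2 < (1+\epsilon)\|\bm{u}\|^2 k\sigma^2$. Since every term is homogeneous of degree two in $\bm{u}$, I would normalize to $\|\bm{u}\|=1$ without loss of generality, reducing the target to controlling $\|\bm{P}\bm{u}\|^2$ around its mean $k\sigma^2$.

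Next I would decompose $\|\bm{P}\bm{u}\|^2 = \sum_{i=1}^k Y_i^2$, where $Y_i = \langle \bm{P}_{(i)}, \bm{u}\rangle = \sum_{j=1}^d P_{ij}u_j$ is the inner product of $\bm{u}$ with the $i$-th row of $\bm{P}$. Because the entries $P_{ij}$ are independent, $0$-mean, and $\sigma$-subgaussian, each $Y_i$ is a linear combination of independent subgaussian variables with weights $u_j$, hence itself $\sigma\|\bm{u}\|=\sigma$-subgaussian with mean zero and $\mathbb{E}[Y_i^2]=\sigma^2$; moreover the $Y_i$ are i.i.d. across rows. The core of the argument is then the observation that the square of a subgaussian variable is subexponential, so I would derive a moment-generating-function bound of the form $\mathbb{E}[\exp(\lambda(Y_i^2-\sigma^2))] \le \exp(c\lambda^2\sigma^4)$ valid for $|\lambda|$ below a threshold of order $\sigma^{-2}$.

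With the per-row MGF in hand, I would apply the Chernoff method to both tails of the i.i.d. sum $\sum_i Y_i^2$. For the upper tail, $P(\sum_i Y_i^2 > (1+\epsilon)k\sigma^2) \le \exp(-\lambda(1+\epsilon)k\sigma^2)\,\mathbb{E}[\exp(\lambda \sum_i Y_i^2)]$, and after factorizing over the independent rows and optimizing the free parameter $\lambda$, the exponent collapses to $-k\epsilon^2/8$; the lower tail is handled symmetrically with $\lambda<0$ to yield the same $\exp(-k\epsilon^2/8)$ rate. A union bound over the two one-sided failure events then gives the stated success probability $1-2\exp(-k\epsilon^2/8)$.

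The main obstacle I anticipate is pinning down the MGF constant precisely enough to land the sharp exponent $k\epsilon^2/8$ rather than a weaker one: the subgaussian hypothesis only guarantees $\mathbb{E}[Y_i^2]\le\sigma^2$ a priori, so I would need the second moment to equal $\sigma^2$ (which is what makes $k\sigma^2$ the correct mean) together with careful bookkeeping of the subexponential tail parameters throughout the Chernoff optimization. As a useful cross-check, I note that for the special case of Gaussian entries one can bypass the MGF estimate entirely, since rotational invariance makes $\|\bm{P}\bm{u}\|^2/\sigma^2$ exactly $\chi^2_k$-distributed and standard chi-square concentration delivers the bound directly, confirming the constant in the general subgaussian argument.
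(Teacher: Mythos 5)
Your outline is essentially correct, but note first that the paper never proves Lemma~\ref{jll} at all: it is imported as a black box from the cited references and only \emph{used} (in the proof of Theorem~\ref{thm1}), so there is no in-paper proof to match against. The closest relative inside the paper is Lemma~\ref{long} in the proof of Theorem~\ref{thm2}, which is precisely your Gaussian ``cross-check'': there the authors exploit $2$-stability of the Gaussian to write $\norm{\bm{P}\bm{w}}^2$ as a scaled chi-square variable and apply chi-square tail bounds (landing on the exponent $\frac{k}{2}(\frac{\epsilon^2}{2}-\frac{\epsilon^3}{3})$ rather than $\frac{k\epsilon^2}{8}$). Your route --- reduce to $\bm{u}=\bm{w}_1-\bm{w}_2$, normalize, write $\norm{\bm{P}\bm{u}}^2=\sum_{i=1}^k Y_i^2$ with $Y_i$ i.i.d.\ $\sigma$-subgaussian, and run a two-sided Chernoff argument --- is the standard proof of the subgaussian Johnson--Lindenstrauss lemma and correctly supplies what the paper outsources.

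The two bookkeeping issues you flag are genuinely the crux, and both can be closed. First, the lemma as literally stated is false unless one adopts the convention $\mathbb{E}[\bm{P}_{ij}^2]=\sigma^2$ (subgaussian parameter equal to the entry variance): if $\mathrm{Var}(\bm{P}_{11})<\sigma^2$, then $\norm{\bm{P}\bm{u}}^2$ concentrates around $k\,\mathrm{Var}(\bm{P}_{11})\norm{\bm{u}}^2$ and the lower inequality fails with high probability; so this must be taken as a hypothesis (it is implicit in the cited formulation), not derived. Second, the constants work out once that convention is in place. For the upper tail, the Gaussian-linearization bound $\mathbb{E}\exp(\lambda Y_i^2)\le(1-2\lambda\sigma^2)^{-1/2}$ (valid for $0\le 2\lambda\sigma^2<1$) reproduces the chi-square Chernoff exponent $\frac{k}{2}\big(\epsilon-\ln(1+\epsilon)\big)$, and one checks that $\frac{1}{2}(\epsilon-\ln(1+\epsilon))-\frac{\epsilon^2}{8}$ is increasing on $(0,1)$ and vanishes at $0$, so the upper tail is at most $\exp(-\frac{k\epsilon^2}{8})$. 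For the lower tail you need a fourth-moment bound, and here the tight-variance convention rescues you: since $\mathbb{E}[Y_i^2]=\sigma^2$ exactly, comparing the order-$t^4$ Taylor coefficients of $\mathbb{E}\exp(tY_i)\le\exp(t^2\sigma^2/2)$ forces $\mathbb{E}[Y_i^4]\le 3\sigma^4$, whence $\mathbb{E}\exp(-\lambda Y_i^2)\le\exp(-\lambda\sigma^2+\tfrac{3}{2}\lambda^2\sigma^4)$ and optimizing $\lambda$ gives a lower-tail probability at most $\exp(-\frac{k\epsilon^2}{6})\le\exp(-\frac{k\epsilon^2}{8})$. With these two points made explicit, your sketch is a complete and correct proof.
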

\section{Proofs}
In the section, we aim to provide the complete proof for self-containedness.
\subsection{Lemma~\ref{lem1}}
We take the expectation of the inner product between projected vectors:

\begin{equation}
    \begin{aligned}
        \mathbb{E}(\langle \bm{P}\bm{w}_1,\bm{P}\bm{w}_2 \rangle) &= \frac{1}{n}\mathbb{E}\bigg( \sum_{l=1}^n \big(\sum_{j=1}^d r_{lj}\{\bm{w}_1\}_j \sum_{i=1}^d r_{li} \{\bm{w}_2\}_i \big) \bigg)\\
        &=\frac{1}{n}\sum_{l=1}^n \bigg( \sum_{j=1}^d \mathbb{E}(r_{lj}^2)\{\bm{w}_1\}_j\{\bm{w}_2\}_j +\sum_{j=1}^d \mathbb{E}(r_{lj})\{\bm{w}_1\}_j \cdot\sum_{i\neq j: i=1}^d\mathbb{E}(r_{li})\{\bm{w}_2\}_i \bigg)\\
        &=\langle \bm{w}_1,\bm{w}_2 \rangle
    \end{aligned}
\end{equation}
where $\{\bm{w}_1\}_i$ is the $i$-th element of the vector $\bm{w}_1$, and $\{\bm{w}_2\}_i$ is the $i$-th element of the vector $\bm{w}_2$. From the equation, we see that the lemma is proved.\qed

\subsection{Theorem~\ref{thm1}}
Before proving the the main theorem, we first show a lemma from \cite{kaban2015improved}.
\begin{lemma}[Dot Product under Random Projection]
Let $\bm{w}_1,\bm{w}_2\in\mathbb{R}^d$, $\bm{P}\in\mathbb{R}^{k\times d},k<d$ be a random projection matrix having i.i.d. 0-mean subgaussian entries with parameter $\sigma^2$, and $\bm{P}\bm{w}_1,\bm{P}\bm{w}_2$ be the images of $\bm{w}_1,\bm{w}_2$ under projection $\bm{P}$. Then, $\forall \epsilon\in(0,1)$:
\begin{equation}\label{new1}
    \bm{w}_1^\top\bm{w}_2 k \sigma^2-\epsilon k\sigma^2\norm{\bm{w}_1}\norm{\bm{w}_2}<(\bm{P}\bm{w}_1)^\top\bm{P}\bm{w}_2<\bm{w}_1^\top\bm{w}_2 k \sigma^2+\epsilon k\sigma^2\norm{\bm{w}_1}\norm{\bm{w}_2}
\end{equation}
holds with probability $1-2\exp(-\frac{k\sigma^2}{8})$.
\label{dotprod}
\end{lemma}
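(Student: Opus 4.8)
The plan is to reduce the claim to a norm-preservation statement via the polarization identity and then invoke the Johnson--Lindenstrauss guarantee (Lemma~\ref{jll}) on two auxiliary vectors. First I would observe that every term in the claimed sandwich is bilinear in $(\bm{w}_1,\bm{w}_2)$: replacing $\bm{w}_1$ by $\hat{\bm{w}}_1=\bm{w}_1/\norm{\bm{w}_1}$ and $\bm{w}_2$ by $\hat{\bm{w}}_2=\bm{w}_2/\norm{\bm{w}_2}$ rescales $(\bm{P}\bm{w}_1)^\top\bm{P}\bm{w}_2$, the quantity $\bm{w}_1^\top\bm{w}_2$, and the slack term $\epsilon k\sigma^2\norm{\bm{w}_1}\norm{\bm{w}_2}$ all by the same factor $1/(\norm{\bm{w}_1}\norm{\bm{w}_2})$. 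Hence it suffices to prove the statement for unit vectors, where the slack collapses to $\epsilon k\sigma^2$, and the general case follows by multiplying back through by $\norm{\bm{w}_1}\norm{\bm{w}_2}$.

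For unit vectors I would then apply the polarization identity
\begin{equation}
(\bm{P}\hat{\bm{w}}_1)^\top\bm{P}\hat{\bm{w}}_2=\tfrac{1}{4}\big(\norm{\bm{P}(\hat{\bm{w}}_1+\hat{\bm{w}}_2)}^2-\norm{\bm{P}(\hat{\bm{w}}_1-\hat{\bm{w}}_2)}^2\big),
\end{equation}
which converts the inner-product question into one about the projected squared norms of the two fixed vectors $\hat{\bm{w}}_1+\hat{\bm{w}}_2$ and $\hat{\bm{w}}_1-\hat{\bm{w}}_2$. Applying Lemma~\ref{jll} to each of these (with the second argument set to $\bm{0}$, so that $\bm{P}\bm{0}=\bm{0}$) gives, for $z\in\{\hat{\bm{w}}_1+\hat{\bm{w}}_2,\ \hat{\bm{w}}_1-\hat{\bm{w}}_2\}$, the two-sided control $(1-\epsilon)\norm{z}^2k\sigma^2<\norm{\bm{P}z}^2<(1+\epsilon)\norm{z}^2k\sigma^2$, and I would combine the two events by a union bound.

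Substituting the JLL upper bound for $\norm{\bm{P}(\hat{\bm{w}}_1+\hat{\bm{w}}_2)}^2$ together with the lower bound for $\norm{\bm{P}(\hat{\bm{w}}_1-\hat{\bm{w}}_2)}^2$ into the polarization identity yields the asserted upper bound, and the reversed substitution yields the lower bound. The algebra collapses using $\norm{\hat{\bm{w}}_1+\hat{\bm{w}}_2}^2-\norm{\hat{\bm{w}}_1-\hat{\bm{w}}_2}^2=4\,\hat{\bm{w}}_1^\top\hat{\bm{w}}_2$ and the parallelogram law $\norm{\hat{\bm{w}}_1+\hat{\bm{w}}_2}^2+\norm{\hat{\bm{w}}_1-\hat{\bm{w}}_2}^2=2(\norm{\hat{\bm{w}}_1}^2+\norm{\hat{\bm{w}}_2}^2)=4$: the ``signal'' piece reproduces $k\sigma^2\,\hat{\bm{w}}_1^\top\hat{\bm{w}}_2$ while the $\epsilon$-piece is exactly $\tfrac{k\sigma^2}{4}\cdot\epsilon\cdot 4=\epsilon k\sigma^2$, which is the unit-vector slack. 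Rescaling by $\norm{\bm{w}_1}\norm{\bm{w}_2}$ then recovers Eq.~\ref{new1}.

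I do not expect a genuine obstacle, as this is a standard corollary of JLL; the two care points are (i) the normalization-then-rescale step, which is precisely what produces the factor $\norm{\bm{w}_1}\norm{\bm{w}_2}$ in the slack rather than the $\tfrac{1}{2}(\norm{\bm{w}_1}^2+\norm{\bm{w}_2}^2)$ one would get without first passing to unit vectors, and (ii) tracking the union-bound constant, since each Lemma~\ref{jll} event contributes a failure probability of $2\exp(-k\epsilon^2/8)$, and this bookkeeping is what fixes the exact exponential constant appearing in the final probability.
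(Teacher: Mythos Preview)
Your polarization-identity reduction is exactly the standard proof of this fact and is correct. Note, however, that the paper does not actually prove Lemma~\ref{dotprod}: it simply quotes the statement from \cite{kaban2015improved} and then uses it to derive Theorem~\ref{thm1}. So there is no ``paper's own proof'' to compare against beyond the cited source, which uses the same polarization route you describe.

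One bookkeeping remark: as you yourself flag in point~(ii), invoking Lemma~\ref{jll} separately on $\hat{\bm{w}}_1+\hat{\bm{w}}_2$ and $\hat{\bm{w}}_1-\hat{\bm{w}}_2$ and union-bounding yields failure probability $4\exp(-k\epsilon^2/8)$, not the $2\exp(\cdot)$ printed in the lemma statement. Do not try to force your argument to match the constant $2$; the discrepancy is in the statement, not in your proof (and the exponent $k\sigma^2/8$ in the printed probability is itself a typo for $k\epsilon^2/8$, as is clear from Lemma~\ref{jll}). Your normalization-then-rescale step to get $\norm{\bm{w}_1}\norm{\bm{w}_2}$ rather than $\tfrac{1}{2}(\norm{\bm{w}_1}^2+\norm{\bm{w}_2}^2)$ in the slack is the right move and is handled cleanly.
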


From Lemma~\ref{jll}, we have that 

\begin{equation}\label{new2}
\begin{aligned}
&(1-\epsilon)\norm{\bm{w}_1}^2k\sigma^2<\norm{\bm{P}\bm{w}_1}^2<(1+\epsilon)\norm{\bm{w}_1}^2k\sigma^2\\
&(1-\epsilon)\norm{\bm{w}_2}^2k\sigma^2<\norm{\bm{P}\bm{w}_2}^2<(1+\epsilon)\norm{\bm{w}_2}^2k\sigma^2
\end{aligned}
\end{equation}
which holds with probability $\big(1-2\exp({-\frac{k\epsilon^2}{8}})\big)^2$.
\par
Then we combine Eq.~\ref{new2} to Lemma~\ref{dotprod} and obtain that
\begin{equation}
\begin{aligned}
    \frac{\cos(\theta_{(\bm{w}_1,\bm{w}_2)})-\epsilon}{1+\epsilon}<\cos(\theta_{(\bm{P}\bm{w}_1,\bm{P}\bm{w}_2)})
    <\frac{\cos(\theta_{(\bm{w}_1,\bm{w}_2)})+\epsilon}{1-\epsilon}
\end{aligned}
\end{equation}
which holds with probability $\big(1-2\exp({-\frac{k\epsilon^2}{8}})\big)^2$. $\theta_{(\bm{P}\bm{w}_1,\bm{P}\bm{w}_2)}$ denotes the angle between $\bm{P}\bm{w}_1$ and $\bm{P}\bm{w}_2$, and $\theta_{(\bm{w}_1,\bm{w}_2)}$ denotes the angle between $\bm{w}_1$ and $\bm{w}_2$. \qed

\subsection{Theorem~\ref{thm2}}
Before proving our main theorem, we first show a lemma~\cite{shi2012margin} below:
\begin{lemma}\label{long}
For any $w\in\mathbb{R}^d$, any random Gaussian matrix $\bm{P}\in\mathbb{R}^{k\times d}$ where $\bm{P}_{ij}=\frac{1}{\sqrt{n}}r_{ij}$ and $r_{ij},\forall i,j$ are i.i.d. random variables from $\mathcal{N}(0,1)$, and $\epsilon\in(0,1)$
\begin{equation}
    \textnormal{Pr}\bigg( (1-\epsilon)\leq \frac{\norm{\bm{P}\bm{w}}^2}{\norm{\bm{w}}^2} \leq(1+\epsilon) \bigg)\geq 1-2\exp\big(-\frac{n}{2}(\frac{\epsilon^2}{2}-\frac{\epsilon^3}{3})\big) 
\end{equation}
\end{lemma}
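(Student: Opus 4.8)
The plan is to recognize this as the classical single-vector Johnson--Lindenstrauss concentration, which reduces to a Chernoff bound on a chi-squared random variable. First I would exploit scale invariance in $\bm{w}$: set $\bm{u}=\bm{w}/\norm{\bm{w}}$ and work with $\norm{\bm{P}\bm{u}}^2=\norm{\bm{P}\bm{w}}^2/\norm{\bm{w}}^2$. The key distributional identity is that the $l$-th coordinate of $\bm{P}\bm{u}$ equals $\frac{1}{\sqrt{n}}\sum_{j} r_{lj}\{\bm{u}\}_j$, a linear combination of i.i.d. standard normals whose coefficient vector has unit $\ell_2$-norm; by the stability of the Gaussian this coordinate is exactly $\mathcal{N}(0,1/n)$, and since the rows of $\bm{P}$ are independent I obtain $\norm{\bm{P}\bm{u}}^2=\frac{1}{n}\sum_{l=1}^{k} Z_l^2$ with $Z_l$ i.i.d. $\mathcal{N}(0,1)$. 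Taking $n=k$ (the only choice making the mean equal one, consistent with Lemma~\ref{lem1}, which gives $\mathbb{E}\norm{\bm{P}\bm{u}}^2=\norm{\bm{u}}^2=1$, and with the exponent in the statement), the quantity $Q:=\sum_{l=1}^k Z_l^2$ is $\chi^2_k$ and the target event becomes $\{k(1-\epsilon)\le Q\le k(1+\epsilon)\}$.

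Next I would control the two tails of $Q$ separately via the moment generating function $\mathbb{E}[e^{tQ}]=(1-2t)^{-k/2}$, valid for $t<1/2$. For the upper tail, Markov's inequality applied to $e^{tQ}$ followed by optimization over $t$ (the optimizer is $t=\frac{\epsilon}{2(1+\epsilon)}$) yields $\Pr[Q\ge k(1+\epsilon)]\le\exp\!\big(-\frac{k}{2}(\epsilon-\ln(1+\epsilon))\big)$. Symmetrically, using $\mathbb{E}[e^{-sQ}]=(1+2s)^{-k/2}$ with the optimizer $s=\frac{\epsilon}{2(1-\epsilon)}$ gives $\Pr[Q\le k(1-\epsilon)]\le\exp\!\big(-\frac{k}{2}(-\epsilon-\ln(1-\epsilon))\big)$.

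Finally I would simplify the exponents with the Taylor expansions of $\ln(1\pm\epsilon)$. For $\epsilon\in(0,1)$ the alternating bound $\ln(1+\epsilon)\le\epsilon-\frac{\epsilon^2}{2}+\frac{\epsilon^3}{3}$ gives $\epsilon-\ln(1+\epsilon)\ge\frac{\epsilon^2}{2}-\frac{\epsilon^3}{3}$, while $-\ln(1-\epsilon)=\epsilon+\frac{\epsilon^2}{2}+\frac{\epsilon^3}{3}+\cdots$ gives $-\epsilon-\ln(1-\epsilon)\ge\frac{\epsilon^2}{2}-\frac{\epsilon^3}{3}$ as well. Each tail is then bounded by $\exp(-\frac{k}{2}(\frac{\epsilon^2}{2}-\frac{\epsilon^3}{3}))$, and a union bound over the two events yields failure probability at most $2\exp(-\frac{k}{2}(\frac{\epsilon^2}{2}-\frac{\epsilon^3}{3}))$, which is the claimed bound with $n=k$.

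I expect the work here to be mostly bookkeeping rather than conceptual: the genuinely delicate points are pinning down the exact $\mathcal{N}(0,1/n)$ law of the projected coordinates, choosing the Chernoff optimizers precisely, and truncating the Taylor series so that the constants come out as exactly $\frac{1}{2}$ and $\frac{1}{3}$ instead of looser values. The one subtlety worth flagging explicitly is reconciling the bound's $n$ with the projection dimension $k$; the cleanest route is to absorb the normalization $\frac{1}{\sqrt{n}}$ into the chi-squared scaling and verify that $n=k$ is forced by $\mathbb{E}\norm{\bm{P}\bm{u}}^2=1$.
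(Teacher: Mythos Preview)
Your proposal is correct and follows essentially the same route as the paper: reduce $\norm{\bm{P}\bm{w}}^2/\norm{\bm{w}}^2$ to a scaled chi-squared via Gaussian stability, apply the Chernoff/MGF tail bound, and simplify the exponents with the Taylor inequalities $\ln(1+\epsilon)\le\epsilon-\tfrac{\epsilon^2}{2}+\tfrac{\epsilon^3}{3}$ and $\ln(1-\epsilon)\le-\epsilon-\tfrac{\epsilon^2}{2}$. The paper in fact obtains the slightly sharper $\exp(-\tfrac{n}{4}\epsilon^2)$ for the lower tail before loosening it in the union bound, and your observation that the statement only makes sense with $n=k$ is exactly the notational slip present in the paper itself.
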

\begin{proof}[Proof of Lemma~\ref{long}]
From Lemma~\ref{lem1}, we have that $\mathbb{E}(\norm{\bm{P}\bm{w}}^2)=\norm{\bm{w}}^2$. Due to 2-stability of the Gaussian distribution, we have that $\sum_{j=1}^d r_{lj}w_j=\norm{\bm{w}}z_l$ where $z_l\sim \mathcal{N}(0,1)$. As a result, we have that
\begin{equation}
    \norm{\bm{P}\bm{w}}^2 =\frac{1}{n}\bm{w}^2\sum_{l=1}^n z_l^2
\end{equation}
where $\sum_{l=1}^n z_l^2$ is chi-square distributed with $n$-degree freedom. Then we apply the standard tail bound of the chi-square distribution and obtain
\begin{equation}
\begin{aligned}
    \textnormal{Pr}\bigg( \norm{\bm{P}\bm{w}}^2\leq(1-\epsilon)\norm{\bm{w}^2}\bigg)&\leq \exp\bigg( \frac{n}{2}\big(1-(1-\epsilon)+\ln(1-\epsilon)\big) \bigg)\\
    &\leq \exp (-\frac{n}{4}\epsilon^2)
\end{aligned}
\end{equation}
where the inequality $\ln(1-\epsilon)\leq - \epsilon-\frac{\epsilon^2}{2}$ is applied. Similarly, one can have
\begin{equation}
\begin{aligned}
    \textnormal{Pr}\bigg( \norm{\bm{P}\bm{w}}^2\leq(1+\epsilon)\norm{\bm{w}^2}\bigg)&\leq \exp\bigg( \frac{n}{2}\big(1-(1+\epsilon)+\ln(1+\epsilon)\big) \bigg)\\
    &\leq \exp (-\frac{n}{2}(\frac{\epsilon^2}{2}-\frac{\epsilon^3}{3}))
\end{aligned}
\end{equation}
where the inequality $\ln(1+\epsilon)\leq\epsilon-\frac{\epsilon^2}{2}+\frac{\epsilon^3}{3}$ is used.
\end{proof}
From the lemma above, we apply the union bound and have that 

\begin{equation}\label{supp1}
\begin{aligned}
&(1-\epsilon) \leq \frac{\norm{\bm{P}\bm{w}_1}^2}{\norm{\bm{w}_1}^2} \leq (1+\epsilon)\\
&(1-\epsilon) \leq \frac{\norm{\bm{P}\bm{w}_2}^2}{\norm{\bm{w}_2}^2} \leq (1+\epsilon)\\
\end{aligned}
\end{equation}
which holds with probability at least $1-4\exp(-\frac{n}{2}(\frac{\epsilon^2}{2}-\frac{\epsilon^3}{3}))$. Using Eq.~\ref{supp1}, we can have that

\begin{equation}\label{supp2}
\norm{\frac{\bm{P}\bm{w}_1}{\norm{\bm{P}\bm{w}_1}}-\frac{\bm{P}\bm{w}_2}{\norm{\bm{P}\bm{w}_2}}}^2 \leq \norm{\frac{\bm{P}\bm{w}_1}{\sqrt{1-\epsilon}\norm{\bm{w}_1}}-\frac{\bm{P}\bm{w}_2}{\sqrt{1-\epsilon}\norm{\bm{w}_2}}}^2
\end{equation}
\par
From Eq.~\ref{supp1} and the condition that $\bm{w}_1^\top\bm{w}_2>0$, we further have that

\begin{equation}\label{supp3}
\begin{aligned}
    \norm{\frac{\bm{P}\bm{w}_1}{\norm{\bm{w}_1}}-\frac{\bm{P}\bm{w}_2}{\norm{\bm{w}_2}}}^2&\leq \norm{\sqrt{1+\epsilon}-\sqrt{1-\epsilon}}^2\\
    &\leq  \norm{\sqrt{1+\epsilon}\bigg(\frac{\bm{P}\bm{w}_1}{\norm{\bm{P}\bm{w}_1}}-\frac{\bm{P}\bm{w}_2}{\norm{\bm{P}\bm{w}_2}}\bigg)}^2 + \norm{\sqrt{1+\epsilon}-\sqrt{1-\epsilon}}^2
\end{aligned}
\end{equation}
\par
Then we apply Lemma~\ref{long} to the vector $(\frac{\bm{w}_1}{\norm{\bm{w}_1}}-\frac{\bm{w}_2}{\norm{\bm{w}_2}})$ and see that

\begin{equation}\label{supp4}
\begin{aligned}
    (1-\epsilon)\norm{\frac{\bm{w}_1}{\norm{\bm{w}_1}}-\frac{\bm{w}_2}{\norm{\bm{w}_2}}}^2\leq \norm{\frac{\bm{P}\bm{w}_1}{\norm{\bm{w}_1}}-\frac{\bm{P}\bm{w}_2}{\norm{\bm{w}_2}}}^2\leq (1+\epsilon)\norm{\frac{\bm{w}_1}{\norm{\bm{w}_1}}-\frac{\bm{w}_2}{\norm{\bm{w}_2}}}^2
\end{aligned}
\end{equation}
which holds with probability $1-2\exp\big(-\frac{n}{2}(\frac{\epsilon^2}{2}-\frac{\epsilon^3}{3})\big)$. Then we have that
\begin{equation}\label{supp5}
\begin{aligned}
    \frac{\langle\bm{w}_1,\bm{w}_2\rangle}{\norm{\bm{w}_1}\norm{\bm{w}_2}}&=1-\frac{1}{2}\norm{\frac{\bm{w}_1}{\norm{\bm{w}_1}}-\frac{\bm{w}_2}{\norm{\bm{w}_2}}}^2,\\
    \frac{\langle\bm{P}\bm{w}_1,\bm{P}\bm{w}_2\rangle}{\norm{\bm{P}\bm{w}_1}\norm{\bm{P}\bm{w}_2}}&=1-\frac{1}{2}\norm{\frac{\bm{P}\bm{w}_1}{\norm{\bm{P}\bm{w}_1}}-\frac{\bm{P}\bm{w}_2}{\norm{\bm{P}\bm{w}_2}}}^2.
\end{aligned}
\end{equation}
From Eq.~\ref{supp2}, Eq.~\ref{supp3} and Eq.~\ref{supp4}, we can learn that $\norm{\frac{\bm{P}\bm{w}_1}{\norm{\bm{P}\bm{w}_1}}-\frac{\bm{P}\bm{w}_2}{\norm{\bm{P}\bm{w}_2}}}^2$ is bounded below and above. Further combining Eq.~\ref{supp5}, we have that
\begin{equation}
\begin{aligned}
    \frac{1+\epsilon}{1-\epsilon}\cos(\theta_{(\bm{w}_1,\bm{w}_2)})-
    \frac{2\epsilon}{1-\epsilon}<\cos(\theta_{(\bm{P}\bm{w}_1,\bm{P}\bm{w}_2)})<\frac{1-\epsilon}{1+\epsilon}\cos(\theta_{(\bm{w}_1,\bm{w}_2)})+\frac{1+2\epsilon}{1+\epsilon}-\frac{\sqrt{(1-\epsilon^2)}}{1+\epsilon}
\end{aligned}
\end{equation}
where $\theta_{(\bm{P}\bm{w}_1,\bm{P}\bm{w}_2)}$ denotes the angle between $\bm{P}\bm{w}_1$ and $\bm{P}\bm{w}_2$, and $\theta_{(\bm{w}_1,\bm{w}_2)}$ denotes the angle between $\bm{w}_1$ and $\bm{w}_2$.\qed

\subsection{Theorem~\ref{thm3}}

We first introduce a lemma before proving the theorem.

\begin{lemma}\label{stat}[Direct Result from \cite{cuesta2007sharp}]
Let $\mathcal{H}$ be a separable Hilbert space, and let $\mu$ be a non-degenerate Gaussian measure on $\mathcal{H}$. Let $P,Q$ be Borel probability measures on $\mathcal{H}$. Assume that:
\begin{itemize}
    \item The abosolute moments $m_n:=\int\norm{x}^ndP(x)$ are finite and satisfy $\sum_{n\geq 1} m_n^{\frac{-1}{n}}=\infty$;
    \item The set $\varepsilon(P,Q):=\{x\in\mathcal{H}:P_{\langle x\rangle}=Q_{\langle x\rangle}\}$, where ${\langle x\rangle}$ denotes the one-dimensional subspace spanned by $x$, is of positive $\mu$-measure.
\end{itemize}
Then we have $P=Q$.
\end{lemma}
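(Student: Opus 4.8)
The plan is to reduce the Hilbert-space statement to the classical fact that a measure is determined by its characteristic functional, and to use the moment hypothesis to upgrade the partial information carried by the positive-$\mu$-measure family of directions to full information. First I would record the essential (and easy) reduction: once one shows that $P_{\langle x\rangle}=Q_{\langle x\rangle}$ for \emph{every} $x\in\mathcal{H}$, not merely those in $\varepsilon(P,Q)$, the conclusion is immediate. Indeed, equality of the laws of the scalar $\langle\cdot,x\rangle$ for all $x$ says exactly that $\int e^{i\langle z,x\rangle}dP(z)=\int e^{i\langle z,x\rangle}dQ(z)$ for all $x$, i.e. the characteristic functionals of $P$ and $Q$ coincide on all of $\mathcal{H}$; uniqueness of characteristic functionals on a separable Hilbert space then forces $P=Q$. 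Thus the whole problem is to promote the agreement of projections from the positive-measure set $\varepsilon(P,Q)$ to the entire space.

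\textbf{Role of the moment condition.} The condition $\sum_{n\ge1}m_n^{-1/n}=\infty$ is the Carleman (Denjoy--Carleman) condition. Along any unit direction $x$ the projected law $P_{\langle x\rangle}$ has absolute moments dominated by $m_n$, since $|\langle z,x\rangle|\le\|z\|$, so each $P_{\langle x\rangle}$ is moment-determinate and, more strongly, the family of projected characteristic functions lies in the associated quasi-analytic class. This is what makes local agreement rigid: a quasi-analytic object is pinned down by its behavior on small sets. Concretely I would encode the projection data of $P$ as a single function $F$ on $\mathcal{H}$, built from the continuous symmetric moment forms $x\mapsto\int\langle z,x\rangle^n dP(z)$, which are $n$-homogeneous and bounded by $m_n\|x\|^n$, arranged so that the analogous $Q$-object coincides with $F$ precisely on $\varepsilon(P,Q)$, and then verify that $F$ inherits the quasi-analytic regularity supplied by the Carleman bound.

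\textbf{The geometric upgrade (the heart).} The key input is the infinite-dimensional analogue of ``the zero set of a nonzero analytic function is Lebesgue-null'': for a non-degenerate Gaussian measure $\mu$ on a separable Hilbert space, any set that is \emph{not} $\mu$-null is determining, i.e. it cannot be contained in the zero set of a nontrivial function of the relevant class. This is exactly the content I would harvest from \cite{cuesta2007sharp}, which ultimately rests on Ransford's theorem that nonzero G\^ateaux-holomorphic functions have Gaussian-null zero sets, proved by slicing along complex lines, where a nonzero entire function of one variable has isolated zeros, and then integrating out with a Fubini argument exploiting the rotational quasi-invariance of $\mu$. Since $\varepsilon(P,Q)$ has positive $\mu$-measure it is determining; hence the $P$- and $Q$-encodings agree everywhere, which by the moment-determinacy of each projection yields $P_{\langle x\rangle}=Q_{\langle x\rangle}$ for all $x$, closing the loop with the first step.

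\textbf{Main obstacle.} The genuinely hard part is the determining-set/zero-set theorem above and its compatibility with the Carleman rather than analytic hypothesis: under $\sum_{n\ge1}m_n^{-1/n}=\infty$ the characteristic functionals need not be analytic, only quasi-analytic, so the slicing argument must be carried out in a Denjoy--Carleman class instead of for honest holomorphic functions, and one must show that the non-degeneracy of $\mu$ prevents the agreement set from being absorbed into such a quasi-analytic zero set. A secondary but real nuisance is that only $P$ is assumed to have finite moments, so the encoding has to be phrased through characteristic functionals, which are always defined, rather than through bare moments of $Q$; reconciling these two points is precisely where I would lean on the machinery of \cite{cuesta2007sharp}.
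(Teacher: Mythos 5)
The paper offers no proof of this lemma for you to be compared against: it is imported verbatim as a known result from \cite{cuesta2007sharp} (the sharp Cram\'er--Wold theorem of Cuesta-Albertos, Fraiman and Ransford), and the paper's only use of it is as a black box in deriving Theorem~\ref{thm3}. Your proposal is therefore necessarily a ``different route,'' and as a roadmap of the cited reference it is essentially faithful: the reduction to equality of characteristic functionals once $P_{\langle x\rangle}=Q_{\langle x\rangle}$ holds for \emph{all} $x$ is correct (Borel probability measures on a separable Hilbert space are determined by their characteristic functionals); the observation that $|\langle z,x\rangle|\le\norm{z}$ for unit $x$ transfers the Carleman condition to every one-dimensional projection of $P$, giving moment-determinacy and quasi-analyticity of the projected characteristic functions, is correct; and your identification of the crux --- that a set of positive non-degenerate-Gaussian measure cannot sit inside the zero set of a nontrivial function of the relevant class, an infinite-dimensional analogue of ``zero sets of nonzero analytic functions are null,'' descending from Ransford's Gaussian-null zero-set theorem --- matches the actual engine of \cite{cuesta2007sharp}. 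You also correctly flag the two genuine technical irritants: the slicing argument must be run in a Denjoy--Carleman class rather than a holomorphic one, and only $P$ (not $Q$) is assumed to have finite moments, so the comparison object must be phrased through characteristic functionals. The one caveat is that your attempt is not self-contained: its heart (the determining-set/zero-set theorem in quasi-analytic form) is explicitly harvested from the very reference the paper cites, so in the end both you and the paper rest on the identical external result. Given that the paper treats the lemma purely as a citation, that is acceptable, but what you have produced is an annotated outline of \cite{cuesta2007sharp}, not an independent verification of it.
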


If we consider $\bm{w}\in\mathbb{R}^d$ as a bounded variable, and without loss of generality, we assume that $\bm{p}=\bm{z}/\norm{\bm{z}}$ where $\bm{z}$ is a Gaussian distribution, and then the condition on the moments of $\bm{w}$ in Lemma~\ref{stat} holds. Then with the following lemma, we can easily have the desired result.

\qed

\newpage
\section{Bilateral Projection for CoMHE}\label{biproj}

In this section, we consider bilateral projection for CoMHE (BP-CoMHE) as an extension to the main paper. If we view the neurons in one layer as a matrix $\thickmuskip=2mu \medmuskip=2mu \bm{W}=\{\bm{w}_1,\cdots,\bm{w}_n\}\in\mathbb{R}^{m\times n}$ where $m$ is the dimension of neurons and $n$ is the number of neurons, then the projection mainly considered throughout the paper is to left-multiply a projection matrix $\thickmuskip=2mu \medmuskip=2mu \bm{P}_1\in\mathbb{R}^{r\times m}$ to $\bm{W}$. In fact, we can further reduce the number of neurons by right-multiplying an additional projection matrix $\thickmuskip=2mu \medmuskip=2mu \bm{P}_2\in\mathbb{R}^{n\times r}$ to $\bm{W}$. Specifically, we denote that 
\begin{equation}
    \bm{Y}_1=\bm{P}_1\bm{W}\in\mathbb{R}^{r\times n},\ \ \ \bm{Y}_2=\bm{W}\bm{P}_2\in\mathbb{R}^{m\times r}
\end{equation}
\textbf{BP-CoMHE.} The first variant of BP-CoMHE is to apply the MHE regularization separately to column vectors of $\bm{Y}_1$ and $\bm{Y}_2$, and the learning objective is given by
\begin{equation}
    \min_{\bm{W}}\bm{E}_s(\hat{\bm{y}}^{(1)}_i|_{i=1}^n)+\bm{E}_s(\hat{\bm{y}}^{(2)}_i|_{i=1}^r)
\end{equation}
where we denote that
\begin{equation}
\begin{aligned}
    \bm{Y}_1&=\{ \bm{y}^{(1)}_1,\cdots,\bm{y}^{(1)}_n \}\in\mathbb{R}^{r\times n},\\
    \bm{Y}_2&=\{ \bm{y}^{(2)}_1,\cdots,\bm{y}^{(2)}_r \}\in\mathbb{R}^{m\times r},\\
    \hat{\bm{Y}}_1&=\{ \hat{\bm{y}}^{(1)}_1=\frac{\bm{y}^{(1)}_1}{\|\bm{y}^{(1)}_1\|},\cdots,\hat{\bm{y}}^{(1)}_n=\frac{\bm{y}^{(1)}_n}{\|\bm{y}^{(1)}_n\|} \}\in\mathbb{R}^{r\times n},\\
    \hat{\bm{Y}}_2&=\{ \hat{\bm{y}}^{(2)}_1=\frac{\bm{y}^{(2)}_1}{\|\bm{y}^{(2)}_1\|},\cdots,\hat{\bm{y}}^{(2)}_r=\frac{\bm{y}^{(2)}_r}{\|\bm{y}^{(2)}_r\|} \}\in\mathbb{R}^{m\times r}.
\end{aligned}
\end{equation}
in which we have that $\hat{\bm{y}}^{(1)}_i\in\mathbb{R}^{r\times 1}$ and $\hat{\bm{y}}^{(1)}_i\in\mathbb{R}^{m\times 1}$ are two column vectors of $\hat{\bm{Y}}_1$ and $\hat{\bm{Y}}_2$, respectively. The final neurons obtained for the neural network are still $\bm{W}$. For generating the projection matrices $\bm{P}_1,\bm{P}_2$, we simply use random projection and re-initialize the random matrices every certain number of iterations. 

\par
\noindent\textbf{Low-Rank BP-CoMHE.} More interestingly, we can also approximate $\bm{W}$ with a low-rank factorization~\cite{zhou2011godec} given as follows:
\begin{equation}
\tilde{\bm{W}}= \bm{Y}_2(\bm{P}_1\bm{Y}_2)^{-1}\bm{Y}_1\in\mathbb{R}^{m\times n}
\end{equation}
which inspires us to directly use two set of parameters $\bm{Y}_1$ and $\bm{Y}_2$ to represent the equivalent neurons $\tilde{\bm{W}}$ and apply the MHE regularization separately to their column vectors (similar to the previous BP-CoMHE). Essentially, we learn the matrices $\bm{Y}_1,\bm{Y}_2$ directly via back-propagation. The projection matrix $\bm{P}_1$ is initialized as a random matrix and stays constant during the training. Different from the former case, we will not use $\bm{W}$ as the final neurons in the neural network. Instead, we will use $\tilde{\bm{W}}$ as the final neurons. The number of learnable parameters in total is $mr+nr$, which is significantly lower than the original BP-CoMHE parameterization (\ie, $mn$) if we choose $r$ to be much smaller than both $m$ and $n$. 

\newpage
\section{More Discussion on the Effectiveness of CoMHE}\label{effectiveness}
\subsection{Full results of Figure~\ref{training} in the main paper}

\begin{figure}[h]
  \centering
  \renewcommand{\captionlabelfont}{\footnotesize}
  \setlength{\abovecaptionskip}{5pt}
  \setlength{\belowcaptionskip}{3pt}
  \includegraphics[width=6.1in]{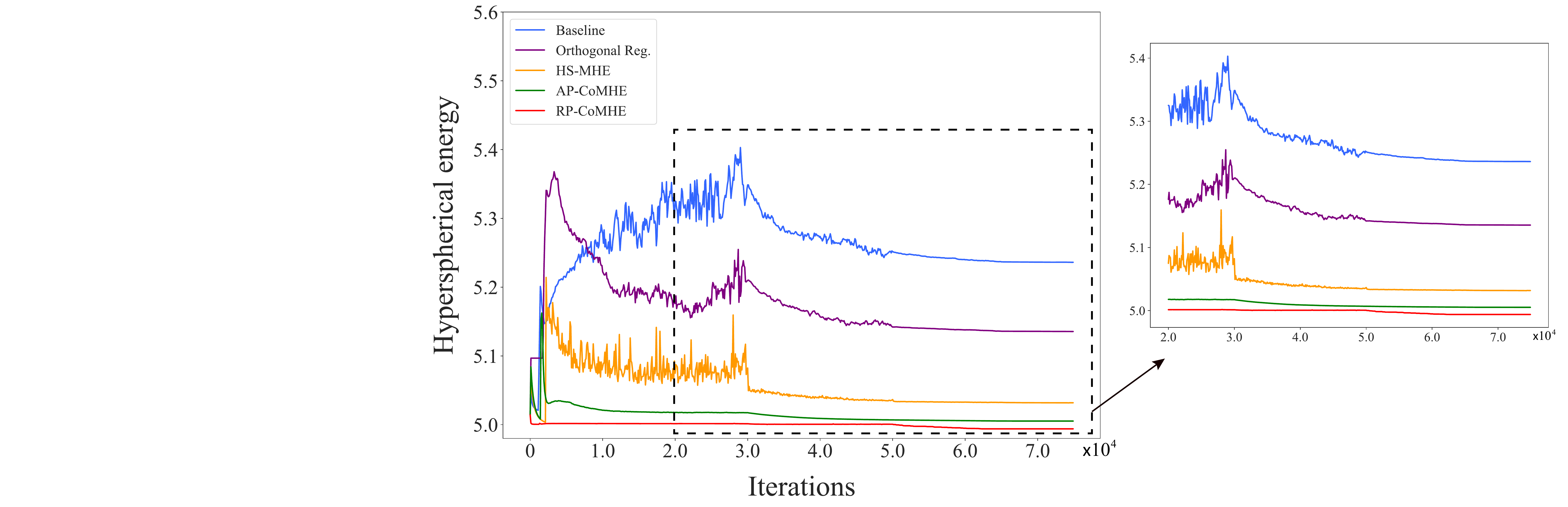}
  \caption{\footnotesize Hyperspherical energy during the entire training. Note that, all networks are initialized with the same weights and therefore have the same hyperspherical energy at the beginning. Note that, ``Orthogonal Reg.'' denotes the orthogonal regularization (use orthogonal constraint to regularize the neurons), which is dramatically different from the rotation/reflection training that is mentioned above and learns orthogonal matrices for neurons.}\label{complete}
\end{figure}

Fig.~\ref{complete} shows the entire training dynamics (from initialization to the end of training) of the hyperspherical energy of baseline CNN and CNN regularized by orthogonal regularization, HS-MHE, AP-CoMHE and RP-CoMHE. All the networks use exactly the same initialized weights to ensure the hyperspherical energy is the same at the beginning. One can observe that the hyperspherical energy is actually very low for the initialized weights. This is because the initialized weights follows Gaussian distribution and the hyperspherical energy is computed with normalized weights. The normalized weights (sampled from Gaussian distribution) follows the uniform distribution on the hypersphere (see Theorem 1 in \cite{OPT2020}), which can obtain the lowest hyperspherical distribution in expectation. However, when the weights of the neural network start to fit the data and minimize the data approximation loss, the neuron weights no longer follow the hyperspherical uniform distribution. Therefore the hyperspherical energy will quickly get large. This is when MHE and CoMHE are useful. From Fig.~\ref{complete}, one can see that without any regularization on hyperspherical energy, the hyperspherical energy of the baseline network gets extremely large at the beginning and then slowly decreases as the training continues. However, the final hyperspherical energy of the baseline network is still way higher than the CNNs regularized by MHE and CoMHE. Notice that, the orthogonality-regularized CNN also obtain high hyperspherical energy at the end (similar to the baseline network). In contrast to MHE, we can observe that CoMHE can effectively minimize the hyperspherical energy and RP-CoMHE achieves significantly lower hyperspherical energy in the end, which well verifies the superiority of the proposed CoMHE.

\newpage
\subsection{Hyperspherical energy dynamics in individual layers}
To demonstrate the hyperspherical energy dynamics in individual layers, we show the hyperspherical energy v.s. iteration in every layer of CNN-9 (as specified in Table~\ref{netarch}) in Fig.~\ref{individual_he}. Since the last fully-connected layer (\ie, classifier layer) is learned from scratch (no rotation training is applied), we do not plot its hyperspherical energy. From the results in Fig.~\ref{individual_he}, we can observe that CoMHE can more effectively minimize the hyperspherical energy in every layer, and RP-CoMHE performs the best in terms of the hyperspherical energy minimization.

\vspace{2mm}
\begin{figure}[h]
  \centering
  \renewcommand{\captionlabelfont}{\footnotesize}
  \setlength{\abovecaptionskip}{5pt}
  \setlength{\belowcaptionskip}{3pt}
  \includegraphics[width=6.76in]{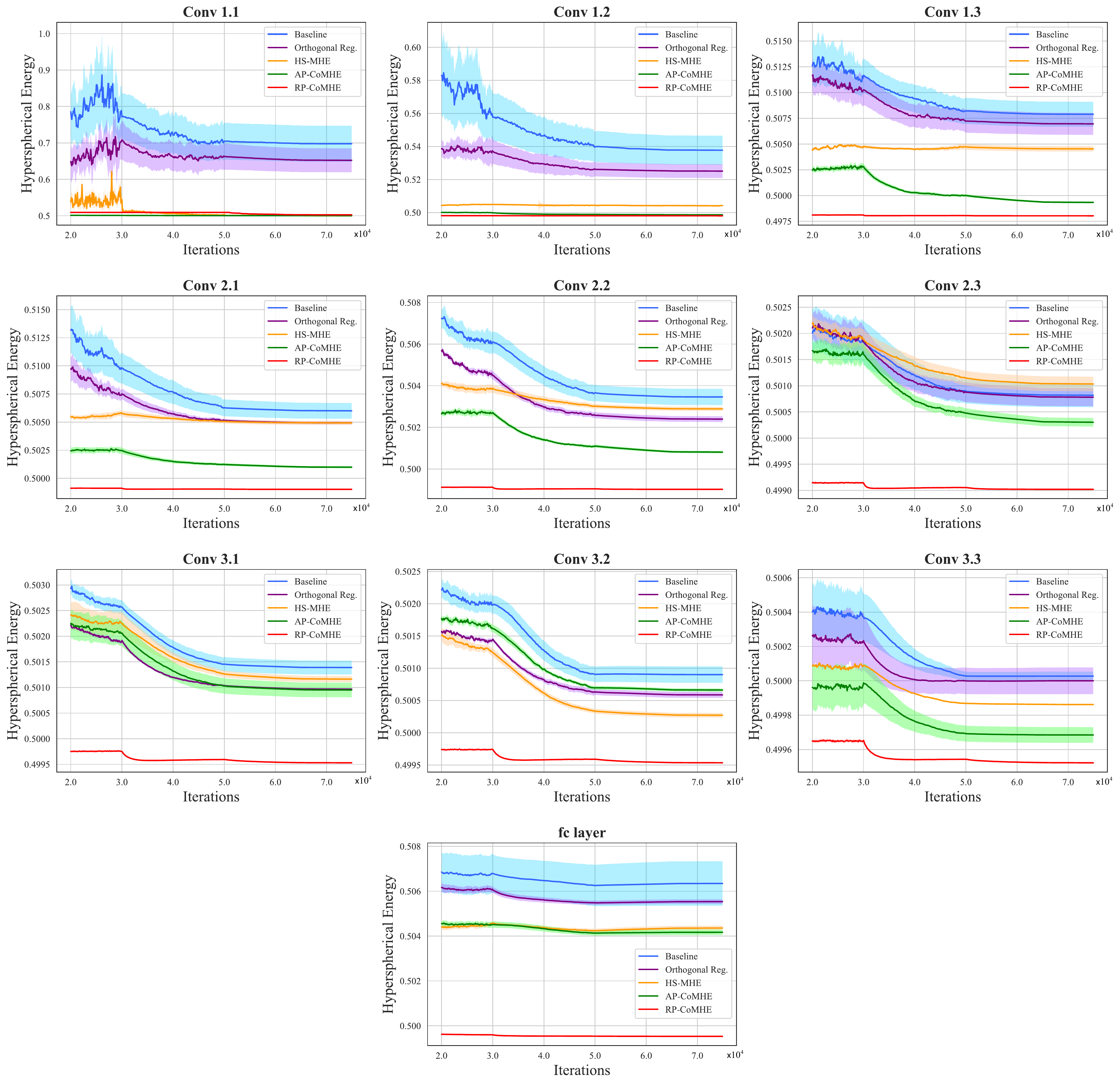}
  \caption{\footnotesize Hyperspherical energy of every layer (Conv1.1, Conv1.2, Conv1.3, Conv2.1, Conv2.2, Conv2.3, Conv3.1, Conv3.2, Conv3.3, fc1) after the 20000-th iteration. Note that, all networks are initialized with the same weights and therefore have the same hyperspherical energy at the beginning.}\label{individual_he}
\end{figure}

\clearpage

\newpage
\section{Additional Exploratory Experiments}\label{extra_explor}

\textbf{Frequency of re-initialization in RP-CoMHE.} In RP-CoMHE, we need to re-initialize the random projections every certain number of iterations to avoid trivial solutions caused by bad initialization. Here, we test how the frequency of re-initialization will affect the accuracy on CIFAR-100, with the projection dimension being 30 and the number of projection being 20. The iteration number being $\infty$ in Table~\ref{freq_rp} represents that the random projection is fixed throughout the training once it is initialized. The results shows the performance is not very sensitive to the frequency of re-initialization, but we cannot fix the random projection during training as it may cause trivial solutions and hurt the performance.

\begin{table}[h]
	\centering
	\footnotesize
	\newcommand{\tabincell}[2]{\begin{tabular}{@{}#1@{}}#2\end{tabular}}
	\renewcommand{\captionlabelfont}{\footnotesize}
	\setlength{\abovecaptionskip}{5pt}
	\setlength{\belowcaptionskip}{0pt}
\begin{tabular}{c || c c c c} 
  \hline
  \# Iterations  & 1 & 200 & 1000 & $\infty$\\
 \hline\hline
 RP-CoMHE & \textbf{24.6} & 24.84 & 24.62  &26.09\\
 \hline
\end{tabular}
\caption{\footnotesize Error of different \# iterations for re-initialization.}
\label{freq_rp}
\end{table}

\textbf{Naively learning projection basis from training data}. We study the case where we enable the back-propagation gradient to flow back to the projection basis. That is to say, the model learns the projection basis naively using training data. We find that naively learning the projection basis yields much worse performance (26.5\%), compared to RP-CoMHE (24.6\%). It is even worse than our baseline half-space MHE (25.96\%). The results show that naively learning projection basis from training data leads to inferior performance. Allowing the projection basis to be updated according to the training data could undermine the strength of CoMHE regularization imposed on the neurons.

\textbf{Shared projection basis.} We take RP-CoMHE as an example to empirically verify the advantages of shared projection basis across different layers. We set the projection dimension to 20 and the number of projections to 30. The plain CNN-9 is used as baseline. Specifically for shared projection basis, we share the random projection basis in Conv1.x, Conv2.x and Conv3.x separately. The shared projection yields 24.6\% error rate. For independent projection basis, we use separated projection basis for different layers and only obtain 26.05\% error rate. The results show that using shared random projection basis for neurons of the same dimensionality improves the network generalization while saving parameters. Note that, all the other experiments use shared projection basis by default.

\newpage
\section{Training Runtime Comparison}\label{exp_time}

We also provide runtime comparison for all the proposed CoMHE. We use the plain CNN-9 for all the methods in this experiment. For RP, we set the projection dimension to 30 and the number of projection to 5. For AP, the number of projection is 1 and the projection dimension is set to 30. This hyperparameter setting for CoMHE can achieve the best testing accuracy on CIFAR-100. The results in Table~\ref{runtime} are computed using the total runtime of runing 100 iterations. We can see that the runtime of RP-CoMHE, AP-CoMHE and Adv-CoMHE is comparable to HS-MHE and the baseline. Without any code optimization, RP-CoMHE is $36\%$ slower than the baseline and $18\%$ slower than the HS-MHE, and AP-CoMHE is $34\%$ slower than the baseline and $17\%$ slower than the HS-MHE. Note that, although CoMHE is relatively slower in terms of training runtime, CoMHE will not  affect the testing runtime of a trained model. That is to say, CoMHE-regularized CNN has the same inference speed with its baseline CNN counterpart. In fact, as long as the training time of CoMHE-regularized CNNs is not geometrically larger than the standard CNN, such computational cost is neglectable in practice and practitioners usually care more about the inference time rather than the training time (CoMHE will not affect the inference time).

\begin{table}[h]
	\centering
	\footnotesize
	\newcommand{\tabincell}[2]{\begin{tabular}{@{}#1@{}}#2\end{tabular}}
	\renewcommand{\captionlabelfont}{\footnotesize}
	\setlength{\abovecaptionskip}{5pt}
	\setlength{\belowcaptionskip}{0pt}
\begin{tabular}{c||c}
  \hline
  \multirow{1}{*}{Method} & Runtime (s)  \\
 \hline\hline
Baseline & 5.61\\
HS-MHE & 6.46\\
RP-CoMHE & 7.62\\
AP-CoMHE & 7.48\\
Adv-CoMHE & 6.37\\
Group CoMHE & 11.12\\
 \hline
\end{tabular}
\caption{\footnotesize Training Runtime (s / 100 iterations) comparison on CIFAR-100.}
\label{runtime}
\vspace{-3mm}
\end{table}

\newpage
\section{Experiments on Graph Convolutional Networks}\label{graph}
We also use CoMHE to improve graph convolutonal networks~(GCN)~\cite{kipf2016semi} for node classification in a graph. We use the official code from \cite{kipf2016semi}\footnote{The code is available at \url{https://github.com/tkipf/gcn}.}, so the experimental setting and hyperparameter setup are exactly the same as \cite{kipf2016semi}. The only difference is that we apply an additional MHE or CoMHE to the weight matrix. Specifically, the graph convolution network uses the following forward model:
\begin{equation}
    \bm{Z}=\textnormal{Softmax}\big(\hat{\bm{A}}\cdot\textnormal{ReLU}(\hat{\bm{A}}\cdot\bm{X}\cdot\bm{W}_0)\cdot\bm{W}_1\big)
\end{equation}
where $\hat{\bm{A}}=\tilde{\bm{D}}^{\frac{1}{2}}\tilde{\bm{A}}\tilde{\bm{D}}^{\frac{1}{2}}$. We note that $\bm{A}$ is the adjacency matrix of the graph,  $\tilde{\bm{A}}=\bm{A}+\bm{I}$ ($\bm{I}$ is an identity matrix), and $\tilde{\bm{D}}=\sum_j\tilde{\bm{A}}_{ij}$. $\bm{X}\in\mathbb{R}^{n\times d}$ is the feature matrix of $n$ nodes in the graph (feature dimension is $d$). $\bm{W}_1$ is the weights of the classifiers. $\bm{W}_0$ is the weight matrix of size $d\times h$ where $h$ is the dimension of the hidden space. We view every column of $\bm{W}_0$ as a neuron, and therefore, there will be $h$ neurons in total. We simply apply MHE or CoMHE to regularize these $h$ neurons. The experimental results are given in Table~\ref{GCN}. We can see from the results that the CoMHE-regularized GCN can consistently outperform the MHE-regularized GCN and the GCN baseline. We use exactly the same code as in the official repository, and the only difference is the regularization on $\bm{W}_0$. We emphasize that CoMHE will not change the inference speed of GCN, so this $1\%-2\%$ performance gain is more like a ``free lunch''.

\begin{table}[h]
	\centering
	\footnotesize
	\newcommand{\tabincell}[2]{\begin{tabular}{@{}#1@{}}#2\end{tabular}}
	\renewcommand{\captionlabelfont}{\footnotesize}
	\setlength{\abovecaptionskip}{5pt}
	\setlength{\belowcaptionskip}{0pt}
\begin{tabular}{c||c c c}
  \hline
  \multirow{1}{*}{Method} & Citeseer & Cora & Pubmed  \\
 \hline\hline
GCN Baseline & 70.3 & 81.3 & 79.0\\
HS-MHE~\cite{LiuNIPS18} & 71.5 & 82.0 & 79.0 \\
RP-CoMHE & \textbf{72.1} & \textbf{82.7} & \textbf{79.5}\\
AP-CoMHE & 72.0 & 82.6 & \textbf{79.5}\\
 \hline
\end{tabular}
\caption{\footnotesize Classification accuracy (\%) of GCN with different hyperspherical energy regularization.}
\label{GCN}
\vspace{-3mm}
\end{table}

\end{appendix}

\end{document}